\documentclass{article}

\usepackage[nonatbib,final]{neurips_2023}

\usepackage[utf8]{inputenc} %
\usepackage[T1]{fontenc}    %
\usepackage{url}            %
\usepackage{booktabs}       %
\usepackage{amsfonts}       %
\usepackage{nicefrac}       %
\usepackage{microtype}      %
\usepackage{ying}
\usepackage{wrapfig}
\usepackage{multirow}
\usepackage{xspace}
\usepackage{mathrsfs}
\usepackage{times}
\usepackage[ruled,vlined]{algorithm2e}
\usepackage{amsmath}
\allowdisplaybreaks
\usepackage[table]{xcolor}
\usepackage{color,colortbl}
\definecolor{LightCyan}{rgb}{0.88,1,1}
\definecolor{GrayDark}{RGB}{191,191,191}
\definecolor{GrayLight}{RGB}{217,217,217}
\usepackage[colorlinks=true,linkcolor=blue, urlcolor=magenta,citecolor=blue, anchorcolor=blue]{hyperref}
\usepackage{wrapfig}
\usepackage{amsthm}
\usepackage[font=small,labelfont=bf]{caption}
\usepackage{adjustbox}
\usepackage{subcaption}
\usepackage{graphicx}
\newtheorem{theorem}{Theorem}

\newcommand{\ie}{\textrm{i.e.}}
\newcommand{\eg}{\textrm{e.g.}}

\newcommand{\cut}[1]{}
\newcommand{\xhdr}[1]{\noindent{{\bf #1.}}}

\def \calib {\textrm{calib}}
\def \train {\textrm{train}} 
\def \test {\textrm{test}}
\newcommand{\greencheck}{{\color{darkpastelgreen}\CheckmarkBold}}
\newcommand{\redmark}{{\color{red}\ding{55}}}
\newcommand{\std}[1]{{\scriptsize{$\pm$#1}}}

\newtheorem{lemma}[theorem]{Lemma}
\newtheorem{assumption}[theorem]{Assumption}

\definecolor{ForestGreen}{RGB}{34,139,34}
\newcommand{\revise}[1]{{\color{black} #1}}

\usepackage{bbding}
\usepackage{pifont}
\definecolor{darkpastelgreen}{rgb}{0.01, 0.75, 0.24}

\SetCommentSty{mycommfont}

\newcommand{\mname}{\textsc{CF-GNN}\xspace}

\title{Uncertainty Quantification over Graph with \\ Conformalized Graph Neural Networks}

\author{%
  Kexin Huang$^{1}$ \quad Ying Jin$^{2}$ \quad Emmanuel Cand{\`e}s$^{2,3}$ \quad Jure Leskovec$^{1}$ \\
  \\
  $^1$ Department of Computer Science, Stanford University \\
  $^2$ Department of Statistics, Stanford University \\
  $^3$ Department of Mathematics, Stanford University\\
 \texttt{kexinh@cs.stanford.edu}, \texttt{ying531@stanford.edu},\\
 \texttt{candes@stanford.edu}, \texttt{jure@cs.stanford.edu}
}

\begin{document}

\maketitle

\begin{abstract}
  Graph Neural Networks (GNNs) are powerful machine learning prediction models on graph-structured data. However, GNNs lack rigorous uncertainty estimates, limiting their reliable deployment in \revise{settings where the cost of errors is significant}. We propose conformalized GNN (\mname), extending conformal prediction (CP) to graph-based models for guaranteed uncertainty estimates. Given an entity in the graph, \mname produces a prediction set/interval that provably contains the true label with pre-defined coverage probability (\eg~90\%). We establish a permutation invariance condition that enables the validity of CP on graph data and provide an exact characterization of the test-time coverage. Besides valid coverage, it is crucial to reduce the prediction set size/interval length for practical use. We observe a key connection between non-conformity scores and network structures, which motivates us to develop a topology-aware output correction model that learns to update the prediction and produces more efficient prediction sets/intervals. Extensive experiments show that \mname achieves any pre-defined target marginal coverage while significantly reducing the prediction set/interval size by up to 74\% over the baselines. It also empirically achieves satisfactory conditional coverage over various raw and network features. 
\end{abstract}
\section{Introduction}

Graph Neural Networks (GNNs) have shown great potential in learning representations for graph-structured data, which has led to their widespread adoption in weather forecasting~\cite{lam2022graphcast}, drug discovery~\cite{li2022graph}, and recommender systems~\cite{wu2022graph}, etc. As GNNs are increasingly deployed in  high-stakes settings, it is important to understand the uncertainty in the predictions they produce. One prominent approach to uncertainty quantification is to construct a prediction set/interval that informs a plausible range of values the true outcome  may take. A large number of methods have been proposed to achieve this goal~\cite{hsu2022makes,zhang2020mix,lakshminarayanan2017simple,wang2021confident}. However, these methods often lack theoretical and empirical guarantees regarding their validity, \ie~the probability that the prediction set/interval covers the outcome~\cite{angelopoulos2020uncertainty}. This lack of rigor hinders their reliable deployment in situations where errors can be consequential.

Conformal prediction~\cite{vovk2005algorithmic} (CP) is a framework for producing statistically guaranteed uncertainty estimates. Given a user-specified miscoverage level $\alpha\in(0,1)$, it
leverages a set of ``calibration'' data to
output prediction sets/intervals for new test points that provably include the true outcome with probability at least $1-\alpha$. Put another way, the conformal prediction sets provably only miss the test outcomes  at most $\alpha$ fraction of the time. With its simple formulation, clear guarantee and distribution-free nature, it has been successfully applied to various problems in computer vision~\cite{angelopoulos2020uncertainty,bates2021distribution}, causal inference~\cite{lei2021conformal,jin2023sensitivity,yin2022conformal},  time series forecasting~\cite{gibbs2021adaptive,zaffran2022adaptive}, and drug discovery~\cite{jin2022selection}.

Despite its success in numerous domains, conformal prediction has remained largely unexplored in the context of graph-structured data. One primary challenge  is that it is unclear if 
the only, yet crucial, assumption for CP---exchangeability between the test and calibration samples---holds for graph data. When applying conformal prediction, exchangeability is usually ensured by independence among the trained model, the calibration data, and test samples (see Appendix~\ref{appendix:full_vs_split} for more discussion). However, in the transductive setting, GNN training employs all nodes within the same graph--including test points--for message passing, creating intricate dependencies among them. Thus, to deploy conformal prediction for graph data, the first challenge is to identify situations where valid conformal prediction is possible given a fitted GNN model that already involves test information.

Efficiency is another crucial aspect of conformal prediction for practical use: a prediction set/interval with an enormous set size/interval length might not be practically desirable even though it achieves valid coverage. %
Therefore, the second major challenge is to develop a graph-specific approach to reduce the size of the prediction set or the length of the prediction interval (dubbed as \emph{inefficiency} hereafter for brevity) while retaining the attractive coverage property of conformal prediction.

\xhdr{Present work} We propose conformalized GNN (\mname),\footnote{See Figure~\ref{fig:method} for an overview. The code is available at \url{https://github.com/snap-stanford/conformalized-gnn}.}
extending conformal prediction to GNN for rigorous uncertainty quantification over graphs. We begin by establishing the validity of conformal prediction for graphs. We show that in the transductive setting, regardless of the dependence among calibration, test, and training nodes, standard conformal prediction~\cite{vovk2005algorithmic} is valid as long as the score function (whose definition will be made clear in Section~\ref{sec:background}) is invariant to the ordering of calibration and test samples. This condition is easily satisfied by popular GNN models. Furthermore, we provide an exact characterization of the empirical test-time coverage. 

Subsequently, we present a new approach that learns to optimize the inefficiencies of conformal prediction. 
We conduct an empirical analysis which reveals that inefficiencies are highly correlated along the network edges. 
Based on this observation, we add a topology-aware correction model that updates the node predictions based on their neighbors. This model is trained by minimizing a differentiable efficiency loss that simulates the CP set sizes/interval lengths. In this way, unlike the raw prediction that is often optimized for prediction accuracy, the corrected GNN prediction is optimized to yield smaller/shorter conformal prediction sets/intervals. Crucially, our approach aligns with the developed theory of graph exchangeability, ensuring valid coverage guarantees while simultaneously enhancing efficiency.

We conduct extensive experiments across 15 diverse datasets for both node classification and regression with 8 uncertainty quantification (UQ) baselines, covering a wide range of application domains. While all previous UQ methods fail to reach pre-defined target coverage, \mname achieves the pre-defined empirical marginal coverage. It also significantly reduces the prediction set sizes/interval lengths by up to 74\% compared with a direct application of conformal prediction to GNN. Such improvement in efficiency does not appear to sacrifice adaptivity: we show that \mname achieves strong empirical conditional coverage over various network features.

\begin{figure}[t]
    \centering
    \includegraphics[width=1\textwidth]{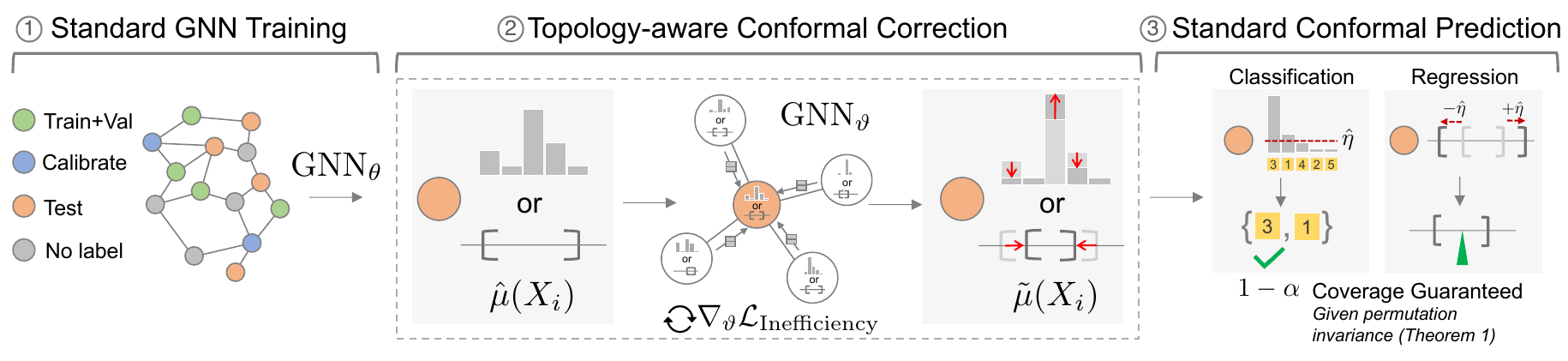}
    \caption{Conformal prediction for graph-structured data. (1) \underline{GNN training}. We first use standard GNN training to obtain a base GNN model ($\mathrm{GNN}_\theta$) that produces prediction scores $\hat{\mu}(X_i)$ for node $i$. It is fixed once trained. (2) \underline{Conformal correction}. Since the training step is not aware of the conformal calibration step, \revise{the size/length of prediction sets/intervals (\ie~efficiency)} are not optimized. We propose a novel correction step that learns to correct the prediction to achieve desirable properties such as efficiency. We use a topology-aware correction model $\mathrm{GNN}_\vartheta$ that takes $\hat{\mu}(X_i)$ as the input node feature and aggregates information from its local subgraph to produce an updated prediction $\tilde{\mu}(X_i)$. $\vartheta$ is trained by simulating the conformal prediction step and optimizing a differentiable inefficiency loss. (3) \underline{Conformal prediction}. We prove that in a transductive random split setting, graph exchangeability holds (Section~\ref{sec:theory}) given permutation invariance. Thus, standard CP can be used to produce a prediction set/interval based on $\tilde\mu$ that includes true label with pre-specified coverage rate 1-$\alpha$.}
    \label{fig:method}
\end{figure}

\section{Background and Problem Formulation}\label{sec:background}

Let $G = (\mathcal{V}, \mathcal{E}, \mathbf{X})$ be a graph, where $\mathcal{V}$ is a set of nodes, $\mathcal{E}$ is a set of edges, and $\mathbf{X} = \{\mathbf{x}_v\}_{v\in \cV}$ is the attributes, where $\mathbf{x}_v \in \mathbb{R}^d$ is a $d$-dimensional feature vector for node $v \in \mathcal{V}$. The label of node $v$ is $y_v\in \cY$. For classification, $\mathcal{Y}$ is the discrete set of possible label classes. For regression, $\cY=\mathbb{R}$. 

\xhdr{Transductive setting} 
We focus on transductive node classification/regression problems with random data split. In this setting, the graph $G$ is fixed. 
At the beginning, we have access to $\{(\mathbf{x}_v,y_v)\}_{v\in \cD}$ as the ``training'' data, 
as well as test data $\cD_\test$ with unknown labels $\{y_v\}_{v\in \cD_\test}$. Here $\cD$ and $\cD_\test$ are disjoint subsets of $\cV$. We work on the prevalent random split setting where nodes in $\cD$ and $\cD_\test$ are randomly allocated from the entire graph, 
and the test sample size is $m = |\cD_\test|$. 
The training node set $\cD$ is then randomly split into $\mathcal{D}_\mathrm{train}/\mathcal{D}_\mathrm{valid}/\mathcal{D}_\mathrm{calib}$ of fixed sizes, the training/validation/calibration set, correspondingly. 
A perhaps nonstandard point here is that we withhold a subset $\cD_\calib$ as ``calibration'' data in order to apply conformal prediction later on. 
During the training step, the data
$\{(\mathbf{x}_v,y_v)\}_{v\in \cD_{\train}\cup \cD_\textrm{valid}}$, the attribute information in $\{\mathbf{x}_v\}_{v\in \mathcal{D}_\mathrm{calib}\cup\mathcal{D}_\mathrm{test}}$ and the entire graph structure $(\cV,\cE)$ are available to the GNN to compute training nodes representations, while  $\{y_v\}_{v\in \mathcal{D}_\mathrm{calib}\cup\mathcal{D}_\mathrm{test}}$ are not seen.

\xhdr{Graph Neural Networks (GNNs)} GNNs learn compact representations that capture network structure and node features. A GNN generates outputs through a series of propagation layers~\cite{gilmer2017neural}, where propagation at layer $l$ consists of the following three steps: (1) \underline{Neural message passing}. GNN computes a message $\mathbf{m}^{(l)}_{uv} = \textsc{Msg}(\mathbf{h}_u^{(l-1)}, \mathbf{h}_v^{(l-1)})$ for every linked nodes $u,v$ based on their embeddings from the previous layer $\mathbf{h}_u^{(l-1)}$ and $\mathbf{h}_v^{(l-1)}$. (2)~\underline{Neighborhood aggregation}. The messages between node $u$ and its neighbors $\mathcal{N}_u$ are aggregated as $\hat{\mathbf{m}}^{(l)}_{u} = \textsc{Agg}({\mathbf{m}^{(l)}_{uv} | v \in \mathcal{N}_u})$. (3) \underline{Update}. Finally, GNN uses a non-linear function to update node embeddings as $\mathbf{h}^{(l)}_u = \textsc{Upd}(\hat{\mathbf{m}}^{(l)}_{u}, \mathbf{h}^{(l-1)}_u)$ using the aggregated message and the embedding from the previous layer. The obtained final node representation is then fed to a classifier or regressor to obtain a prediction $\hat\mu(X)$. 

\xhdr{Conformal prediction} 
In this work, we focus on the computationally efficient split conformal prediction method~\cite{vovk2005algorithmic}.\footnote{See Appendix~\ref{appendix:full_vs_split} for discussion on full and split conformal prediction. We refer to the split conformal prediction when we describe conformal prediction throughout the paper.} Given a predefined miscoverage rate $\alpha \in [0,1]$, 
it proceeds in three steps: (1) \underline{non-conformity scores}. CP first obtains any heuristic notion of uncertainty called non-conformity score $V\colon \cX \times \cY \to \mathbb{R}$. Intuitively, $V(x, y)$ measures how $y$ "conforms" to the prediction at $x$. An example is the predicted probability of a class $y$ in classification or the residual value $V(x,y)=|y-\hat\mu(x)|$ in regression for a predictor $\hat\mu\colon \cX\to \cY$. (2) \underline{Quantile computation}. CP then takes the $1-\alpha$ quantile of the non-conformity scores computed on the calibration set. 
Let $\{(X_i,Y_i)\}_{i=1}^n$ be the calibration data, where $n=|\cD_\calib|$, and compute $\hat{\eta} = \mathrm{quantile}(\{V(X_1, Y_1), \cdots, V(X_n, Y_n)\}, (1-\alpha)(1+\frac{1}{n}))$. (3) \underline{Prediction set/interval construction}. Given a new test point $X_{n+1}$, CP constructs a prediction set/interval $C(X_{n+1}) = \{y\in\cY: V(X_{n+1}, y) \le \hat{\eta}\}$. If  $\{Z_i\}_{i=1}^{n+1}:=\{(X_i,Y_i)\}_{i=1}^{n+1}$ are exchangeable,\footnote{Exchangeability definition: for any $z_1,\dots,z_{n+1}$ and any permutation $\pi$ of $\{1,\dots,n+1\}$, it holds that  $\PP((Z_{\pi(1)},\dots,Z_{\pi(n+1)})=(z_{1},\dots,z_{n+1}))=\PP((Z_1,\dots,Z_{n+1})=(z_{1},\dots,z_{n+1}))$.} then $V_{n+1}:=V(X_{n+1},Y_{n+1})$ is exchangeable with $\{V_i\}_{i=1}^n$ since $\hat\mu$ is given. Thus, $\hat{C}(X_{n+1})$ contains the true label with predefined coverage rate~\cite{vovk2005algorithmic}: $P\{Y_{n+1} \in C(X_{n+1})\} = \PP\{ V_{n+1}\geq \textrm{Quantile}(\{V_1,\dots,V_{n+1}\},1-\alpha) \ge 1-\alpha$ due to exchangeability of $\{V_i\}_{i=1}^{n+1}$. %
This framework works for any non-conformity score.   \mname is similarly non-conformity score-agnostic. 
However, for demonstration, we focus on two popular  scores, described in detail below.

\xhdr{Adaptive Prediction Set (APS)} For the classification task, we use the non-conformity score in APS proposed by~\cite{romano2020classification}. It takes the cumulative sum of ordered class probabilities till the true class. Formally, given any estimator $\hat\mu_j(x)$ for the conditional probability of $Y$ being class $j$ at $X=x$, $j=1,\dots,|\cY|$, we denote the cumulative probability till the $k$-th most promising class as $V(x, k) = \sum_{j=1}^{k} \hat\mu_{\pi_{(j)}}(x)$, where $\pi$ is a permutation of $\cY$ so that $\hat\mu_{\pi(1)}(x) \geq \hat\mu_{\pi(2)}(x)\geq\cdots \geq \hat\mu_{\pi(|\cY|)}(x)$. Then, the prediction set is constructed as $C(x) = \{\pi(1), \cdots, \pi(k^*)\}$, where $k^* = \mathrm{inf}\{k: \sum_{j=1}^{k} \hat\mu_{\pi(j)}(x) \ge \hat{\eta}\}$.

\xhdr{Conformalized Quantile Regression (CQR)} For the regression task, we use CQR in~\cite{romano2019conformalized}. CQR is based on quantile regression (QR). QR \revise{obtains heuristic estimates $\hat\mu_{\alpha/2}(x)$ and $\hat\mu_{1-\alpha/2}(x)$ for the $\alpha/2$-th and $1-\alpha/2$-th conditional quantile functions of $Y$ given $X=x$.} The non-conformity score is $V(x,y) = \mathrm{max}\{\hat\mu_{\alpha/2}(x) - y, y - \hat\mu_{1-\alpha/2}(x)\}$, interpreted as the residual of true label projected to the closest quantile. The prediction interval is then $C(x) = [\hat\mu_{\alpha/2}(x) - \hat{\eta}, \hat\mu_{1-\alpha/2}(x) + \hat{\eta}]$.

In its vanilla form, the non-conformity score (including APS and CQR) in CP does not depend on the calibration and test data. That means, $\{\mathbf{x}_v\}_{v\in \cD_{\calib}\cup\cD_\test}$ are not revealed in the training process of $V$, which is the key to exchangeability. In contrast, GNN training typically leverages the entire graph, and hence the learned model depends on the calibration and test attributes in a complicated way. In the following, for clarity, we denote any non-conformity score built on a GNN-trained model as 
\$
V(x,y; \{z_v\}_{v\in\cD_{\train}\cup\cD_{\mathrm{valid}}}, \{\mathbf{x}_v\}_{v\in \cD_\calib\cup\cD_\test} , \cV, \cE)
\$
to emphasize its dependence on the entire graph, where $z_v=(\mathbf{x}_v,Y_v)$ for $v\in \cV$.

\xhdr{Evaluation metrics} The goal is to ensure valid marginal coverage while decreasing the inefficiency as much as possible. Given the test set $\mathcal{D}_\mathrm{test}$, the empirical marginal coverage is defined as $\mathrm{Coverage}:= \frac{1}{|\mathcal{D}_\mathrm{test}|} \sum_{i\in\mathcal{D}_\mathrm{test}} \mathbb{I}(Y_i \in C(X_i))$. For the regression task, inefficiency is measured as the interval length while for the classification task, the inefficiency is the size of the prediction set: $\mathrm{Ineff}:= \frac{1}{|\mathcal{D}_\mathrm{test}|} \sum_{i\in\mathcal{D}_\mathrm{test}} |C(X_i)|$. The larger the length/size, the more inefficient. Note that inefficiency of conformal prediction is different from accuracy of the original predictions. Our method does not change the trained prediction but modifies the prediction sets from conformal prediction. %

\section{Exchangeability and Validity of Conformal Prediction on Graph}\label{sec:theory}

To deploy CP for graph-structured data, we first study the exchangeability of node information under the transductive setting. We show that under a general permutation invariant condition (Assumption~\ref{assump:permute}), exchangeability of the non-conformity scores is still valid
even though GNN training uses the calibration and test information; this paves the way for applying conformal prediction to GNN models. We develop an exact characterization of the test-time coverage of conformal prediction in such settings. 
Proofs of these results are in Appendix~\ref{app:proof}.

\begin{assumption}\label{assump:permute}
For any permutation $\pi$ of $\cD_\calib\cup\cD_\test$, 
the non-conformity score $V$ obeys 
\$
& V(x,y; \{z_v\}_{v\in\cD_{\train}\cup\cD_{\text{valid}}}, \{\mathbf{x}_v\}_{v\in \cD_\calib\cup\cD_\test} , \cV, \cE) \\ 
&= V(x,y; \{z_v\}_{v\in\cD_{\train}\cup\cD_{\text{valid}}}, \{\mathbf{x}_{\pi(v)}\}_{v\in \cD_\calib\cup\cD_\test} , \cV_\pi, \cE_\pi),
\$
where $(\cV_\pi,\cE_\pi)$ represents a graph where $\cD_\calib\cup \cD_\test$ nodes (indices) are permuted according to $\pi$.
\end{assumption}

Assumption~\ref{assump:permute} imposes a permutation invariance condition for the GNN training, i.e., model output/non-conformity score is invariant to 
permuting the ordering of the calibration and test nodes (with their edges permuted accordingly) on the graph. To put it differently, different selections of calibration sets do not modify the non-conformity scores for any node in the graph.
GNN models (including those evaluated in our experiments) typically obey Assumption~\ref{assump:permute}, because they only use the structures and attributes in the graph without information on the ordering of the nodes~\cite{kipf2016semi,hamilton2017inductive,gilmer2017neural}.  
 
For clarity, we write the calibration data as $\{(X_i,Y_i)\}_{i=1}^n$, where $X_i = X_{v_i}$, and $v_i\in \cD_\calib$ 
is the $i$-th node in the calibration data under some pre-defined ordering. Similarly, the test data are 
$\{(X_{n+j},Y_{n+j})\}_{j=1}^m$, where 
$X_{n+j}=X_{v_j}$, and $v_j\in \cD_\test$ 
is the $j$-th node in the test data. 
We write 
\$
V_i  = V(X_i,Y_i; \{z_i\}_{i\in\cD_{\train}\cup\cD_{\textrm{valid}}}, \{\mathbf{x}_v\}_{v\in \cD_\calib\cup\cD_\test}, \cV, \cE),\quad i=1,\dots,n,n+1,\dots,n+m.
\$
$V_i$ is a random variable that depends on the training process and the split of calibration and test data. The next lemma shows that under Assumption~\ref{assump:permute}, the non-conformity scores are still exchangeable.
\begin{lemma}\label{lem:invariant}

In the transductive setting described in Section~\ref{sec:background}, conditional on the entire unordered graph $(\cV,\cE)$, all the attribute and label information $\{(\mathbf{x}_v,y_v)\}_{v\in \cV}$, and the index sets $\cD_\train$ and $\cD_{\textrm{ct}} :=\cD_\calib\cup \cD_\test$,  \revise{the unordered set of the scores $[V_i]_{i=1}^{n+m}$ are fixed}.
Also, the calibration scores $\{V_i\}_{i=1}^n$ are a simple random sample from $\{V_i\}_{i=1}^{n+m}$. That is, for any subset $\{v_1,\dots,v_n\}\subseteq \{V_i\}_{i=1}^{n+m}$ of size $n$, 
$
P( \{V_i\}_{i=1}^n = \{v_1,\dots,v_n\} \biggiven \{V_i\}_{i=1}^{n+m}) 
= 1/ \textstyle{  \binom{n}{|\cD_\textrm{ct}|} }.
$ 
\end{lemma}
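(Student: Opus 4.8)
The plan is to strip the problem down to its only surviving source of randomness---the assignment of the nodes of $\cD_{\textrm{ct}}$ into a calibration block and a test block---then invoke Assumption~\ref{assump:permute} to show that, after conditioning on everything else, each node already carries a fixed score, and finally read off the simple-random-sample structure of $\{V_i\}_{i=1}^{n}$ from the uniformity of the split.

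First I would make the conditioning precise. Let $\mathcal{F}$ denote the information consisting of the unordered graph $(\cV,\cE)$, all attributes and labels $\{(\mathbf{x}_v,y_v)\}_{v\in\cV}$, and the index sets $\cD_\train$ and $\cD_{\textrm{ct}}$. Since in the transductive setup every node carries a role, $\mathcal{F}$ also determines $\cD_{\textrm{valid}}=\cV\setminus(\cD_\train\cup\cD_{\textrm{ct}})$; the only object not fixed by $\mathcal{F}$ is which size-$n$ subset of $\cD_{\textrm{ct}}$ plays the role of $\cD_\calib$, the remaining $m$ nodes being $\cD_\test$. The two-stage random split is equivalent to drawing an ordered partition of $\cV$ into the four role-blocks uniformly at random and independently of the data and the graph; hence, conditional on $\mathcal{F}$, the set $\cD_\calib$ is uniformly distributed over the $\binom{|\cD_{\textrm{ct}}|}{n}=\binom{n+m}{n}$ size-$n$ subsets of $\cD_{\textrm{ct}}$.

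Next I would show that the score attached to a node is insensitive to the realized split. Fix $v\in\cD_{\textrm{ct}}$. Under any split, the score of $v$ is $V\big(\mathbf{x}_v,y_v;\{z_w\}_{w\in\cD_\train\cup\cD_{\textrm{valid}}},\{\mathbf{x}_w\}_{w\in\cD_{\textrm{ct}}},\cV,\cE\big)$, and every argument appearing here is $\mathcal{F}$-measurable; none of them encodes the calibration/test split by itself. The only way a different split could change this value is through the relabelling of the $\cD_{\textrm{ct}}$ nodes it induces (which node is the $i$-th calibration node, which is the $j$-th test node) fed into the GNN-based score---but that is exactly what Assumption~\ref{assump:permute} forbids, since permuting the $\cD_{\textrm{ct}}$ nodes together with their incident edges leaves $V$ unchanged at every query point. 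Hence the score of $v$ equals a deterministic function $g(v)$ of $\mathcal{F}$, and therefore the unordered collection $[V_i]_{i=1}^{n+m}=[g(v)]_{v\in\cD_{\textrm{ct}}}$ is fixed given $\mathcal{F}$; this is the first assertion.

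Finally I would conclude. The calibration scores $\{V_i\}_{i=1}^{n}$ are precisely $\{g(v):v\in\cD_\calib\}$, the image under the fixed map $g$ of the uniformly random size-$n$ subset $\cD_\calib$ of $\cD_{\textrm{ct}}$; consequently each of the $\binom{n+m}{n}$ possible outcomes has conditional probability $1/\binom{|\cD_{\textrm{ct}}|}{n}$, as claimed (reading the scores as indexed by their originating nodes; any ties in $g$ are handled by keeping this indexing, or by fixing an arbitrary tie-break, and do not affect the conclusion). I expect the middle step to be the crux: one must carefully line up the permutation in Assumption~\ref{assump:permute}---which simultaneously permutes the $\cD_{\textrm{ct}}$ attributes and the edges, and whose query point itself lies in the permuted block---with the relabelling of $\cD_{\textrm{ct}}$ that a change of split induces, so as to legitimately conclude that the per-node scores are split-invariant; the reduction of the randomness to a uniform choice of $\cD_\calib$ and the final counting are then routine.
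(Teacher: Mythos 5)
Your proposal is correct and follows essentially the same route as the paper's proof: condition on the graph, data, and index sets; use Assumption~\ref{assump:permute} to show each node's score is invariant to which subset of $\cD_{\textrm{ct}}$ is designated as $\cD_\calib$, so the unordered score multiset is fixed; then read off the simple-random-sample property from the uniformity of the random split. Your treatment is in fact slightly more explicit than the paper's about why the per-node score is split-invariant (and you correctly write the probability as $1/\binom{|\cD_{\textrm{ct}}|}{n}$, fixing a typo in the lemma statement), but the argument is the same.
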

Based on Lemma~\ref{lem:invariant}, we next show  that any permutation-invariant non-conformity score leads to valid prediction sets, and provide an exact characterization of the distribution of test-time coverage.

\begin{theorem}
\label{thm:exch}
Given any score $V$ obeying Assumption~\ref{assump:permute} and any confidence level $\alpha\in(0,1)$, we define the split conformal prediction set as 
$
\hat{C}(x) = \{y\colon V(x,y) \leq \hat\eta\},
$
where 
\$
\hat\eta = \inf\Big\{\eta\colon \textstyle{\frac{1}{n} \sum_{i=1}^n} \ind\{ V(X_i,Y_i) \leq \eta \} \geq (1-\alpha) (1+1/n)\Big\}. 
\$
Then 
$\PP(Y_{n+j}\in \hat{C}(X_{n+j}))\geq 1-\alpha$, $\forall ~j=1,\dots,m$. Moreover, define $\hat{\mathrm{Cover}} = \frac{1}{m}\sum_{j=1}^m \ind\{Y_{n+j}\in \hat{C}(X_{n+j})\}$. 
If \revise{the $V_i$'s, $i\in \cD_\calib\cup\cD_\test$}, have no ties almost surely, then for any $t\in(0,1)$,
\$
\PP\big( \hat{\mathrm{Cover}} \leq t\big)
= 1 - \Phi_{\rm HG}\big(\lceil (n+1)(1-q)\rceil-1; m+n, n,  \lceil (1-q)(n+1)\rceil +     \lceil mt\rceil     \big),
\$
where $\Phi_{\rm HG}(\cdot;N,n,k)$ denotes the cumulative distribution function of a hyper-geometric distribution with parameters $N,n,k$ (drawing $k$ balls from an urn  wherein $n$ out of $N$ balls are white balls).
\end{theorem}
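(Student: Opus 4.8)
\textit{Proof plan.} I will condition throughout on the $\sigma$-field $\mathcal F$ generated by the unordered graph $(\cV,\cE)$, all attributes and labels $\{(\mathbf{x}_v,y_v)\}_{v\in\cV}$, and the index sets $\cD_\train,\cD_{\textrm{valid}}$ and $\cD_{\textrm{ct}}=\cD_\calib\cup\cD_\test$. Every assertion will be established conditionally on $\mathcal F$; since the conditional laws I obtain depend only on $n$, $m$ and $\alpha$ (not on the conditioning variables), they transfer to the unconditional statements. By Assumption~\ref{assump:permute} and Lemma~\ref{lem:invariant}, on $\mathcal F$ the unordered collection $\{V_v\}_{v\in\cD_{\textrm{ct}}}$ is a fixed set, which by the ``no ties'' hypothesis I write as $w_1<\cdots<w_{n+m}$; the only remaining randomness is the random split, under which $\cD_\calib$ is a uniformly random size-$n$ subset of $\cD_{\textrm{ct}}$ and the split-ordered vector $(V_1,\dots,V_n,V_{n+1},\dots,V_{n+m})$ is a uniformly random ordering of $\{w_1,\dots,w_{n+m}\}$, hence exchangeable. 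Unwinding its definition, $\hat\eta$ is the $k^\star$-th smallest calibration score with $k^\star:=\lceil(1-\alpha)(n+1)\rceil=\lceil(n+1)(1-q)\rceil$, with the convention $\hat\eta=+\infty$ when $k^\star>n$. In that boundary case $\hat C(x)=\cY$ for all $x$, so $\hat{\mathrm{Cover}}\equiv1$ and $\PP(\hat{\mathrm{Cover}}\le t)=0$ for $t<1$; the claimed formula degenerates to the same value because a hypergeometric count with $n$ white balls never exceeds $n$, so its CDF evaluated at $k^\star-1\ge n$ equals $1$. Hence I assume $k^\star\le n$ from now on and identify the calibration scores with $\{w_a:a\in A\}$ for a uniformly random size-$n$ set $A=\{a_{(1)}<\cdots<a_{(n)}\}\subseteq\{1,\dots,n+m\}$, so that $\hat\eta=w_{a_{(k^\star)}}$.

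\textit{Step 1: marginal validity.} Fix $j\in\{1,\dots,m\}$. Since $Y_{n+j}\in\hat C(X_{n+j})$ is equivalent to $V_{n+j}\le\hat\eta$, and $(V_1,\dots,V_n,V_{n+j})$ is exchangeable with $\hat\eta$ being exactly the empirical $(1-\alpha)(1+1/n)$-quantile of $\{V_i\}_{i=1}^n$, the standard split-conformal argument recalled in Section~\ref{sec:background} gives $\PP(V_{n+j}\le\hat\eta\mid\mathcal F)\ge1-\alpha$. Averaging over $\mathcal F$ yields $\PP(Y_{n+j}\in\hat C(X_{n+j}))\ge1-\alpha$ for every $j$.

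\textit{Step 2: exact law of $\hat{\mathrm{Cover}}$.} Because $w_1<\cdots<w_{n+m}$, a test score $w_b$ (that is, $b\in\{1,\dots,n+m\}\setminus A$) satisfies $w_b\le\hat\eta=w_{a_{(k^\star)}}$ iff $b\le a_{(k^\star)}$; among the $a_{(k^\star)}$ indices that are $\le a_{(k^\star)}$, exactly $k^\star$ belong to $A$, so the number of test scores below the threshold equals $a_{(k^\star)}-k^\star$, and therefore $\hat{\mathrm{Cover}}=(a_{(k^\star)}-k^\star)/m$. Since $m\,\hat{\mathrm{Cover}}$ is integer-valued, for $t\in(0,1)$ the event $\{\hat{\mathrm{Cover}}\le t\}$ equals $\{a_{(k^\star)}\le k^\star+\lfloor mt\rfloor\}$, which in turn equals $\{|A\cap\{1,\dots,k^\star+\lfloor mt\rfloor\}|\ge k^\star\}$ because the $k^\star$-th smallest element of $A$ is $\le\ell$ precisely when $A$ contains at least $k^\star$ indices in $\{1,\dots,\ell\}$. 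Finally, $|A\cap\{1,\dots,\ell\}|$ has a hypergeometric law with population $m+n$, with $\ell$ ``white'' indices and $n$ ``drawn'' indices (the set $A$) --- equivalently, by the symmetry of the hypergeometric distribution in those two parameters, with $n$ white and $\ell$ drawn --- so $\PP(\hat{\mathrm{Cover}}\le t\mid\mathcal F)=1-\Phi_{\rm HG}(k^\star-1;\,m+n,\,n,\,k^\star+\lfloor mt\rfloor)$. This matches the stated formula: as functions of $t$ both sides are step functions supported on $\{0,1/m,\dots,1\}$, and at those jump points $\lfloor mt\rfloor=\lceil mt\rceil$, so either rounding specifies the same distribution. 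The expression is free of $\mathcal F$, hence it also gives the unconditional law.

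The proof carries no delicate analysis: Lemma~\ref{lem:invariant} already supplies all the probabilistic content, and the coverage bound is the textbook conformal argument. The only place that needs care --- and the main (minor) obstacle --- is Step 2: correctly rewriting $m\,\hat{\mathrm{Cover}}$ as ``an order statistic of $A$ minus a constant,'' pinning down the integer rounding, checking the $k^\star>n$ boundary case, and lining up the hypergeometric parametrization with the one in the statement.
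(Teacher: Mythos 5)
Your proposal is correct, and it reaches the hypergeometric law by a somewhat different bookkeeping than the paper, which is worth comparing. Both arguments share the same skeleton: condition on the unordered graph, attributes/labels, and the index sets, invoke Lemma~\ref{lem:invariant} to reduce everything to a simple random sample (your uniformly random size-$n$ index set $A$), and get marginal coverage from the textbook split-conformal exchangeability argument. For the exact law of $\hat{\mathrm{Cover}}$, the paper works with the population c.d.f.\ $F$ and the calibration c.d.f.\ $\hat F_n$, first derives the distribution of $\hat\eta$ as a hypergeometric tail, and then inverts $F$ via the generalized inverse $F^{-1}$; you instead identify $\hat\eta$ with the $k^\star$-th order statistic of the calibration scores and compute $m\,\hat{\mathrm{Cover}} = a_{(k^\star)}-k^\star$ directly, reading off the hypergeometric from $|A\cap\{1,\dots,\ell\}|$. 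Your route buys two things. First, it handles the degenerate case $k^\star=\lceil(1-\alpha)(n+1)\rceil>n$ (where $\hat\eta=+\infty$), which the paper silently excludes. Second, it is sharper on the rounding: your derivation yields $\lfloor mt\rfloor$, which is exactly the right-continuous c.d.f.\ of $\hat{\mathrm{Cover}}$ for every $t\in(0,1)$, whereas the paper's $\lceil mt\rceil$ comes from the step $\{F(\hat\eta)\le s\}=\{\hat\eta\le F^{-1}(s)\}$, which is only exact when $Ns$ is an integer; since $\Phi_{\rm HG}(k^\star-1;N,n,k)$ is nonincreasing in $k$, the ceiling version can overstate $\PP(\hat{\mathrm{Cover}}\le t)$ strictly between support points, though, as you note, the two expressions agree at every $t\in\{0,1/m,\dots,1\}$ and therefore determine the same distribution. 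Your closing remark that either rounding ``specifies the same distribution'' is the right way to reconcile your formula with the one stated in the theorem; if one insists on the displayed identity holding verbatim at all $t\in(0,1)$, the floor version is the one to keep.
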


\begin{wrapfigure}{r}{0.26\textwidth} 
  \centering
  \vspace{-\intextsep}
    \includegraphics[width=\linewidth]{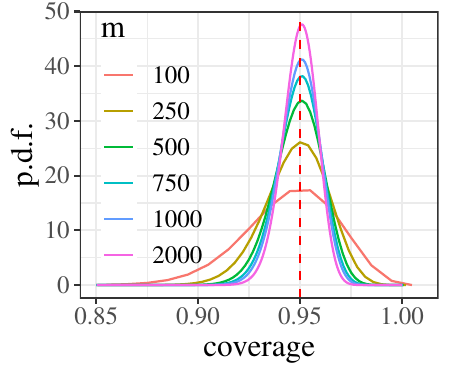}    \caption{\label{fig:cover_distr} P.d.f.~of  $\hat{\textrm{Cover}}$ for $n=1000$ and $\alpha=0.05$; curves represent different values of   test sample size $m$.}
\vspace{-4.2em}
\end{wrapfigure}

Figure~\ref{fig:cover_distr} plots the probability density functions (p.d.f.)~of $\hat{\textrm{Cover}}$ at a sequence of  $t\in[0,1]$ fixing $n=1000$ while varying $m$.  The exact distribution described in Theorem~\ref{thm:exch} is useful in determining the size of the calibration data in order for the test-time coverage to concentrate sufficiently tightly around $1-\alpha$. More discussion and visualization of $\widehat{\textrm{Cover}}$ under different values of $(n,m)$ are in Appendix~\ref{app:figures}. Note that similar exchangeability and validity results are obtained in several concurrent works~\cite{pmlr-v202-h-zargarbashi23a,lunde2023conformal}, yet without exact characterization of the test-time coverage. %

\section{\mname: Conformalized Graph Neural Networks}

We now propose a new method called \mname to reduce inefficiency while maintaining valid coverage. The key idea of \mname is to boost any given non-conformity score with graphical information. %
The method illustration is in Figure~\ref{fig:method} and pseudo-code is in Appendix~\ref{appendix:algorithm}. %

\xhdr{Efficiency-aware boosting} Standard CP takes any pre-trained predictor to construct the prediction set/interval (see Section~\ref{sec:background}). A key observation is that the training stage is not aware of the post-hoc stage of conformal prediction set/interval construction. Thus, it is not optimized for efficiency. Our high-level idea is to include an additional correction step that boosts the non-conformity score, which happens after the model training and before conformal prediction. To ensure flexibility and practicality, our framework is designed as a generic wrapper that works for any pre-trained GNN model, without changing the base model training process.

\xhdr{Motivation: Inefficiency correlation} 
Our approach to boosting the scores is based on exploiting the correlation among connected nodes. 
Since the connected nodes usually represent entities that interact in the real world, there can be strong correlation between them. To be more specific, it is well established in network science that prediction residuals are correlated along edges~\cite{jia2020residual}. Such a result implies a similar phenomenon for inefficiency: taking CQR for regression as an example, the prediction interval largely depends on the  residual of the true outcome from the predicted quantiles. Hence, the prediction interval lengths are also highly correlated for connected nodes. We empirically verify this intuition in Figure~\ref{fig:demo_residual}, where we plot the difference in the prediction interval lengths for connected/unconnected node pairs in the Anaheim dataset using vanilla CQR for GCN.\footnote{That is, we take the predicted quantiles from GCN and directly builds prediction intervals using CQR.} In Figure~\ref{fig:demo_residual}, we observe that inefficiency has a topological root: connected nodes usually have similar residual scales, suggesting the existence of rich neighborhood information for the residuals. This motivates us to utilize such information to correct the scores and achieve better efficiency.

\xhdr{Topology-aware correction model} Based on the motivation above, we update the model predictions using the neighbor predictions. However, the relationship in neighborhood predictions could be complex (\ie~beyond homophily), making heuristic aggregation such as averaging/summing/etc. overly simplistic and not generalizable to all types of graphs. Ideally, we want to design a general and powerful mechanism that  flexibly aggregates graph information. This requirement could be perfectly fulfilled by GNN message passing as it represents a class of learnable aggregation functions over graphs. Therefore, we use a separate GNN learner parameterized by $\vartheta$ for the same network $G$ but with modified input node features; specifically, we use the base GNN prediction ($\mathbf{X}_0 = \hat{\mu}(X)$) as input, and output $\Tilde{\mu}(X) = \mathrm{GNN}_\vartheta (\hat{\mu}(X), G)$. We will then use $\tilde\mu(X)$ as the input for constructing conformal prediction sets/intervals. Note that this second GNN model is a post-hoc process and only requires base GNN predictions, instead of access to the base GNN model.

\begin{figure}[t]
\begin{minipage}{.3\textwidth}
    \centering
    \includegraphics[width=0.9\textwidth]{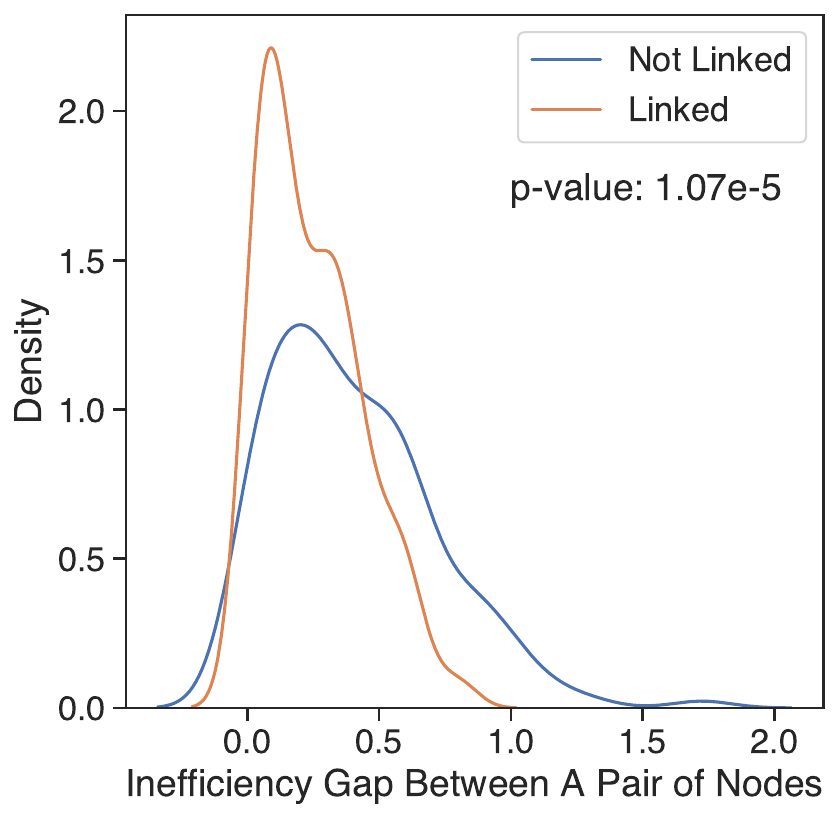}
    \captionof{figure}{Inefficiency is correlated in the network. Connected nodes have significantly smaller gaps in \revise{prediction interval length} compared to unconnected nodes.}
    \label{fig:demo_residual}
\end{minipage}
\hspace{2mm}
\begin{minipage}{.68\textwidth}
\centering
    \includegraphics[width=0.95\textwidth]{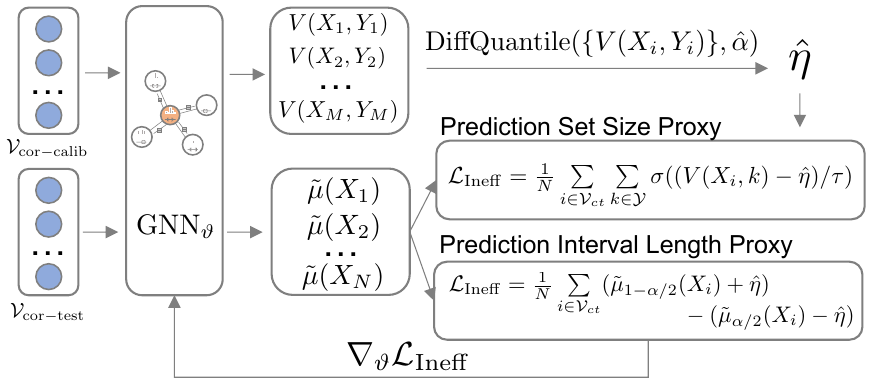}
    \captionof{figure}{We simulate the downstream conformal step and optimize for inefficiency directly. We first produce differentiable quantile $\hat{\eta}$ using $V(X_i, Y_i)$ from $\mathcal{V}_\mathrm{cor-cal}$. We then construct a prediction set size/interval length proxy on $\mathcal{V}_\mathrm{cor-test}$ and directly minimize inefficiency loss by updating $\mathrm{GNN}_\vartheta$.}
    \label{fig:eff}
\end{minipage}
\end{figure}

\xhdr{Training with conformal-aware inefficiency loss} Given the hypothesis class, it remains to devise a concrete recipe for training the correction model $\mathrm{GNN}_\vartheta$ parameters. Recall that as with many other prediction models, a GNN model is typically trained to optimize prediction loss (\ie cross-entropy loss or mean squared error) but not geared towards efficiency for the post-hoc conformal prediction step. We design $\mathrm{GNN}_\vartheta$ to be efficiency-aware by proposing a differentiable inefficiency loss that $\mathrm{GNN}_\vartheta$ optimizes over; this allows integration of GNN message passing to exploit the neighborhood information and also ensures a good $\tilde\mu(\cdot)$ that leads to efficient prediction sets in downstream steps.

We first withhold a small fraction $\gamma$ of the calibration dataset and use it for the correction procedure. The remaining data is used as the ``usual'' calibration data for building the conformal prediction set. We then further split the withheld data into a correction calibration set $\mathcal{V}_\mathrm{cor-cal}$ and correction testing set $\mathcal{V}_\mathrm{cor-test}$, to simulate the downstream conformal inference step. Given $\tilde{\mu}(X) = \mathrm{GNN}_\vartheta (\hat{\mu}(X), G)$ and a target miscoverage rate $\alpha$, the framework follows three steps:

(1) \underline{Differentiable quantile}: we compute a smooth quantile $\hat{\eta}$ based on the $\mathcal{V}_\mathrm{cor-cal}$ by 
\$
\hat{\eta} = \mathrm{DiffQuantile}(\{V(X_i, Y_i) | i \in \mathcal{V}_\mathrm{cor-cal}\}, (1-\alpha)(1+1/|\mathcal{V}_\mathrm{cor-cal}|).
\$ 
Since the non-conformity score is usually differentiable, it only requires differentiable quantile calculation where there are well-established methods available~\cite{chernozhukov2010quantile,blondel2020fast}. 

(2) \underline{Differentiable inefficiency proxy}: we then construct a differentiable proxy $\mathbf{c}$ of the inefficiency on $\mathcal{V}_\mathrm{cor-test}$ by simulating the downstream conformal prediction procedures. We propose general formulas to construct $\mathbf{c}$ that are applicable for any classification and regression tasks respectively:

\textit{a. Inefficiency loss instantiation for Classification}: The desirable proxy is to simulate the prediction set size using $\mathcal{D}_\mathrm{cor-test}$ as the ``test'' data and $\cD_{\mathrm{cor-calib}}$ as the ``calibration'' data. For class $k$ and node $i$ in $\mathcal{D}_\mathrm{cor-test}$, the non-conformity score is $V(X_i, k)$ for class $k$, where $V(\cdot)$, for instance, is the APS score in Section~\ref{sec:background}. Then, we define the inefficiency proxy as 
\$\mathbf{c}_{i,k} = \sigma(\frac{V(X_i, k) - \hat{\eta}}{\tau}),\$ 
where $\sigma(x) = \frac{1}{1+e^{-x}}$ is the sigmoid function and $\tau$ is a temperature hyper-parameter~\cite{stutz2021learning}. It can be interpreted as a soft assignment of class $k$ to the prediction set. When $\tau \rightarrow 0$, it becomes a hard assignment. The per-sample inefficiency proxy is then readily constructed as $\mathbf{c}_{i} = \frac{1}{|\mathcal{Y}|}\sum_{k\in\mathcal{Y}}\mathbf{c}_{i,k}$. 

\textit{b. Inefficiency loss instantiation for Regression}: The desirable proxy is to simulate the prediction interval length. For node $i$ in $\mathcal{V}_\mathrm{cor-test}$, the conformal prediction interval is $[\tilde{\mu}_{\alpha/2}(X_i) - \hat{\eta}, \tilde{\mu}_{1-\alpha/2}(X_i) + \hat{\eta}]$. Thus, the per-sample prediction interval length could be directly calculated as 
\$\mathbf{c}_i = (\tilde{\mu}_{1-\alpha/2}(X_i) + \hat{\eta}) - (\tilde{\mu}_{\alpha/2}(X_i) - \hat{\eta}).\$ 
Since $\mathrm{GNN}_\vartheta$ maps intervals to intervals and do not pose a constraint on the prediction, it may incur a trivial optimized solution where $\tilde{\mu}_{1-\alpha/2}(X) < \tilde{\mu}_{\alpha/2}(X)$. Thus, we pose an additional consistency regularization term: $(\tilde{\mu}_{1-\alpha/2}(X) - \hat{\mu}_{1-\alpha/2}(X))^2 + (\tilde{\mu}_{\alpha/2}(X) - \hat{\mu}_{\alpha/2}(X))^2$. This regularizes the updated intervals to not deviate significantly to reach the trivial solution. 

(3) \underline{Inefficiency loss}: finally, the inefficiency loss is an average of inefficiency proxies $L_\mathrm{ineff} = \frac{1}{|\mathcal{V}_\mathrm{cor-test}|} \sum_i \mathbf{c}_i$. The $\mathrm{GNN}_\vartheta$ is optimized using backpropagation in an end-to-end fashion. 

\xhdr{Conditional coverage} A natural question is whether optimizing the efficiency of conformal prediction may hurt its conditional validity.\footnote{Conditional coverage asks for $\PP(Y_{n+j}\in \hat{C}(X_{n+j})\given X_{n+j}=x)\approx 1-\alpha$ for all $x\in \cX$. Although exact conditional validity is statistically impossible~\cite{foygel2021limits}, approximate conditional validity is a practically important property that APS and CQR are designed for. See Section~\ref{sec:exp} for common ways to assess conditional coverage.} In Section~\ref{sec:exp}, we empirically demonstrate satisfactory conditional coverage across various graph features, which even improves upon the direct application of APS and CQR to graph data. We conjecture that it is because we correct for the correlation among nodes. However, theoretical understanding    is left for future investigation.

\xhdr{Graph exchangeability} The post-hoc correction model is $\mathrm{GNN}$-based, thus, it is permutation-invariant. Thus, it satisfies the exchangeability condition laid out in our theory in Section~\ref{sec:theory}. Empirically, we demonstrate in Section~\ref{sec:exp} that \mname achieves target empirical marginal coverage.

\xhdr{Computational cost} We remark that \mname scales similarly as base GNN training since the correction step follows a standard GNN training procedure but with a modified input attribute and loss function. Notably, as the input to the correction model usually has a smaller attribute size (the number of classes for classification and 2 for regression), it has smaller parameter size than standard GNN training. In addition, it is also compatible with standard GNN mini-batching techniques. 

\xhdr{General loss functions} Finally, we note that the choice of our loss function can be quite general. For instance,  one may directly optimize for conditional validity by choosing a proper loss function.  

\section{Experiments} \label{sec:exp}

We conduct experiments to demonstrate the advantages of \mname over other UQ methods in achieving empirical marginal coverage for graph data, as well as the efficiency improvement with \mname. We also evaluate conditional coverage of \mname  and conduct systematic ablation and parameter analysis to show the robustness of \mname.

\xhdr{Evaluation setup} For node classification, we follow a standard semi-supervised learning evaluation procedure~\cite{kipf2016semi}, where we randomly split data into folds with 20\%/10\%/70\% nodes as $\mathcal{D}_\mathrm{train}$/$\mathcal{D}_\mathrm{valid}$/$\mathcal{D}_\mathrm{calib} \cup \mathcal{D}_\mathrm{test}$. For the node regression task, we follow a previous evaluation procedure from~\cite{jia2020residual} and randomly split the data into folds with 50\%/10\%/40\% nodes as $\mathcal{D}_\mathrm{train}$/$\mathcal{D}_\mathrm{valid}$/$\mathcal{D}_\mathrm{calib} \cup \mathcal{D}_\mathrm{test}$. We conduct 100 random splits of calibration/testing sets to estimate the empirical coverage. Using the test-time coverage distribution in Figure~\ref{fig:cover_distr} to ensure that coverage is concentrated tightly around 1-$\alpha$, we modify the calibration set size to $\min\{1000,|\cD_\calib\cup\cD_\test|/2\}$, and use the rest as the test sample.
For a fair comparison, we first train 10 runs of the base GNN model and then fix the predictions (\ie~the input to UQ baselines and \mname). In this way, we ensure that the gain is not from randomness in base model training. The hyperparameter search strategy and configurations for \mname and baselines can be found in Appendix~\ref{appendix:hyperparam}.

\xhdr{Models \& baselines to evaluate coverage} %
For classification, we first use general statistical calibration approaches including temperate scaling~\cite{guo2017calibration}, vector scaling~\cite{guo2017calibration}, ensemble temperate scaling~\cite{zhang2020mix}. We also use SOTA GNN-specific calibration learners including CaGCN~\cite{wang2021confident} and GATS~\cite{hsu2022makes}. The prediction set is the set of classes from highest to lowest scores until accumulative scores exceed 1-$\alpha$. For regression, we construct prediction intervals using quantile regression (QR)~\cite{koenker2001quantile}, Monte Carlo dropouts (MC dropout)~\cite{gal2016dropout}, and Bayesian loss to model both aleatoric and epistemic uncertainty~\cite{kendall2017uncertainties}. More information about baselines can be found in Appendix~\ref{appendix:baseline}.

\xhdr{Models \& baselines to evaluate efficiency} As smaller coverage always leads to higher efficiency, for a fair comparison,  we can only compare methods on efficiency that achieve the same coverage. Thus, we do not evaluate UQ baselines here since they do not produce exact coverage and are thus not comparable. While any CP-based methods produce exact coverage, to the best of our knowledge, there are no existing graph-based conformal prediction methods for transductive settings. Thus, we can only compare with the direct application of conformal prediction (CP) to base GNN. In the main text, we only show results for GCN~\cite{kipf2016semi} as the base model; results of three additional popular GNN models (GraphSAGE~\cite{hamilton2017inductive}, GAT~\cite{velivckovic2017graph}, and SGC~\cite{wu2019simplifying}) are deferred to Appendix~\ref{appendix:gnn_architectures}. 

\xhdr{Datasets} We evaluate \mname on 8 node classification datasets and 7 node regression datasets with diverse network types such as geographical network, transportation network, social network, citation network, and financial network. Dataset statistics are in Appendix~\ref{appendix:dataset}.

\begingroup

\setlength{\tabcolsep}{2pt}

\begin{table}[t!]
    \centering
    \caption{Empirical marginal coverage of node classification(upper table) and node regression tasks(lower table). The result takes the average and standard deviation across 10 GNN runs with 100 calib/test splits. \greencheck~means that the UQ method reaches the target coverage (\ie~coverage $\ge$ 0.95) while \redmark~means that it fails to reach it. The last column "Covered" becomes \greencheck~if a UQ method reaches target coverage for all datasets and \redmark~vice versus. 
    }
    \vspace{0.5mm}
    \adjustbox{width=\textwidth}{
    \begin{tabular}{l|l|c|c|c|c|c|c|c|c|c}
    \toprule
    Task & UQ Model & Cora & DBLP & CiteSeer & PubMed &Computers & Photo & CS & Physics & Covered? \\ \midrule
    \multirow{6}{*}{\parbox{1.2cm}{Node classif.}} &Temp. Scale.&0.946\std{.003} \redmark&0.920\std{.009} \redmark&0.952\std{.004} \greencheck&0.899\std{.002} \redmark&0.929\std{.002} \redmark&0.962\std{.002} \greencheck&0.957\std{.001} \greencheck&0.969\std{.000} \greencheck & \redmark\\
&Vector Scale.&0.944\std{.004} \redmark&0.921\std{.009} \redmark&0.951\std{.004} \greencheck&0.899\std{.003} \redmark&0.932\std{.002} \redmark&0.963\std{.002} \greencheck&0.958\std{.001} \greencheck&0.969\std{.000} \greencheck& \redmark\\
&Ensemble TS&0.947\std{.003} \redmark&0.920\std{.008} \redmark&0.953\std{.003} \greencheck&0.899\std{.002} \redmark&0.930\std{.002} \redmark&0.964\std{.002} \greencheck&0.958\std{.001} \greencheck&0.969\std{.000} \greencheck& \redmark\\
&CaGCN&0.939\std{.005} \redmark&0.922\std{.004} \redmark&0.949\std{.005} \redmark&0.898\std{.003} \redmark&0.926\std{.003} \redmark&0.956\std{.002} \greencheck&0.954\std{.003} \greencheck&0.968\std{.001} \greencheck& \redmark\\
&GATS&0.939\std{.005} \redmark&0.921\std{.004} \redmark&0.951\std{.005} \greencheck&0.898\std{.002} \redmark&0.925\std{.002} \redmark&0.957\std{.002} \greencheck&0.957\std{.001} \greencheck&0.968\std{.000} \greencheck& \redmark\\\cmidrule{2-11}
&CF-GNN&0.952\std{.001} \greencheck&0.952\std{.001} \greencheck&0.953\std{.001} \greencheck&0.953\std{.001} \greencheck&0.952\std{.001} \greencheck&0.953\std{.001} \greencheck&0.952\std{.001} \greencheck&0.952\std{.001} \greencheck& \greencheck\\
     \bottomrule
    \end{tabular}
    }
    \newline
    \vspace*{1mm}

    \adjustbox{width=0.9\textwidth}{\begin{tabular}{l|l|c|c|c|c|c|c|c|c}
    \toprule
     Task & UQ Model & Anaheim & Chicago & Education & Election & Income & Unemploy. & Twitch & Covered?\\ \midrule
     \multirow{4}{*}{\parbox{1.2cm}{Node regress.}}&QR&0.943\std{.031} \redmark&0.950\std{.007} \redmark&0.959\std{.001} \greencheck&0.956\std{.004} \greencheck&0.960\std{.005} \greencheck&0.954\std{.004} \greencheck&0.900\std{.015} \redmark& \redmark\\
&MC dropout&0.553\std{.022} \redmark&0.427\std{.015} \redmark&0.423\std{.013} \redmark&0.417\std{.008} \redmark&0.532\std{.022} \redmark&0.489\std{.016} \redmark&0.448\std{.017} \redmark& \redmark\\
&BayesianNN&0.967\std{.001} \greencheck&0.955\std{.003} \greencheck&0.957\std{.002} \greencheck&0.958\std{.009} \greencheck&0.970\std{.004} \greencheck&0.960\std{.001} \greencheck&0.923\std{.006} \redmark& \redmark\\\cmidrule{2-10}
&CF-GNN&0.957\std{.003} \greencheck&0.954\std{.002} \greencheck&0.951\std{.001} \greencheck&0.950\std{.001} \greencheck&0.951\std{.001} \greencheck&0.951\std{.001} \greencheck&0.954\std{.001} \greencheck & \greencheck\\
     \bottomrule
    \end{tabular}
    }
    \label{tab:cov}
\end{table}

\endgroup

\subsection{Results}

\xhdr{\mname achieves empirical marginal coverage while existing UQ methods do not} We report marginal coverage of various UQ methods with target coverage at 95\% (Table~\ref{tab:cov}). There are three key takeaways. Firstly, none of these UQ methods  achieves the target coverage for all datasets while \mname does, highlighting the lack of statistical rigor in those methods and the necessity for a guaranteed UQ method. Secondly, it validates our theory from Section~\ref{sec:theory} that \mname achieves designated coverage in transductive GNN  predictions. Lastly, \mname achieves empirical coverage that is close to the target coverage while baseline UQ methods are not. This controllable feature of \mname is practically useful for practitioners that aim for a specified coverage in settings such as planning and selection.

\xhdr{\mname significantly reduces inefficiency} We report empirical inefficiency for 8 classification and 7 regression datasets (Table~\ref{tab:eff}). We observe that we achieve consistent improvement across datasets with up to 74\% reduction in the prediction set size/interval length. 
We additionally conduct the same experiments for 3 other GNN models including GAT, GraphSAGE, and SGC in Appendix~\ref{appendix:gnn_architectures} and we observe that performance gain is generalizable to diverse architecture choices. Furthermore, \mname yields more efficient prediction sets than existing UQ methods even if we manually adjust the nominal level of them to achieve $95\%$ empirical coverage (it is however impossible to do so in practice, here we do this for evaluation). For instance, the best calibration method GATS yields an average prediction size of 1.82 on Cora when the nominal level is tuned to achieve $95\%$ empirical coverage. In contrast, \mname has an average size of 1.76, smaller than GATS. In Appendix~\ref{appendix:raps}, we also observe that \mname also reduces inefficiency for advanced conformal predictor RAPS for classification task. In addition, we find that CF-GNN yields little changes to the prediction accuracy of the original GNN model (Appendix~\ref{appendix:accuracy_prediction}).

\begin{table}[t]
    \centering
    \caption{Empirical inefficiency measured by the size/length of the prediction set/interval for node classification (left table)/regression(right table). A smaller number has better efficiency. We show the relative improvement (\%) of \mname over CP on top of the $\rightarrow$. The result uses APS for classification and CQR for regression with GCN as the base model. Additional results on other GNN models are at Appendix~\ref{appendix:gnn_architectures}. We report the average and standard deviation of prediction sizes/lengths calculated from 10 GNN runs, each with 100 calibration/test splits. }
    \vspace{1mm}
    \begin{adjustbox}{width=0.38\textwidth}
    \begin{tabular}{l|l|c}
    \toprule
    Task & Dataset & CP $\xrightarrow{\hspace*{0.7cm}}$CF-GNN \\\midrule
    \multirow{8}{*}{\parbox{1.2cm}{Node classif.}} &Cora&3.80\std{.28}$\xrightarrow{-53.61\%}$1.76\std{.27}\\
    &DBLP&2.43\std{.03}$\xrightarrow{-49.13\%}$1.23\std{.01}\\
    &CiteSeer&3.86\std{.11}$\xrightarrow{-74.27\%}$0.99\std{.02}\\
    &PubMed&1.60\std{.02}$\xrightarrow{-19.05\%}$1.29\std{.03}\\
    &Computers&3.56\std{.13}$\xrightarrow{-49.05\%}$1.81\std{.12}\\
    &Photo&3.79\std{.13}$\xrightarrow{-56.28\%}$1.66\std{.21}\\
    &CS&7.79\std{.29}$\xrightarrow{-62.16\%}$2.95\std{.49}\\
    &Physics&3.11\std{.07}$\xrightarrow{-62.81\%}$1.16\std{.13}\\
\midrule
    \multicolumn{2}{l|}{Average Improvement} & -53.75\% \\
     \bottomrule
    \end{tabular}
    \end{adjustbox}
    \qquad
    \begin{adjustbox}{width=0.4\textwidth}
    \begin{tabular}{l|l|c}
    \toprule
    Task & Dataset & CP $\xrightarrow{\hspace*{0.7cm}}$CF-GNN \\\midrule
    \multirow{7}{*}{\parbox{1.2cm}{Node regress.}} &Anaheim&2.89\std{.39}$\xrightarrow{-25.00\%}$2.17\std{.11}\\
    &Chicago&2.05\std{.07}$\xrightarrow{-0.48\%}$2.04\std{.17}\\
    &Education&2.56\std{.02}$\xrightarrow{-5.07\%}$2.43\std{.05}\\
    &Election&0.90\std{.01}$\xrightarrow{+0.21\%}$0.90\std{.02}\\
    &Income&2.51\std{.12}$\xrightarrow{-4.58\%}$2.40\std{.05}\\
    &Unemploy&2.72\std{.03}$\xrightarrow{-10.83\%}$2.43\std{.04}\\
    &Twitch&2.43\std{.10}$\xrightarrow{-1.36\%}$2.39\std{.07}\\ \midrule
    \multicolumn{2}{l|}{Average Improvement} & -6.73\% \\
     \bottomrule
    \end{tabular}
    \end{adjustbox}

    \label{tab:eff}
\end{table}

\begin{table}[t]
\begin{minipage}[c]{0.6\textwidth}
\centering
    \captionof{table}{\mname achieves conditional coverage, measured by Worse-Slice Coverage~\cite{romano2020classification}. We use Cora/Twitch as an example classification/regression dataset. Results on other network features and results on target coverage of 0.9 can be found in Appendix~\ref{appendix:cond_cov_networks}.}
    \adjustbox{max width=0.95\textwidth}{
    \begin{tabular}{l|cc|cc}
    \toprule
    Target: 0.95  & \multicolumn{2}{c|}{Classification} & \multicolumn{2}{c}{Regression} \\ \midrule
    Model & CP & CF-GNN & CP & CF-GNN  \\ \midrule
    Marginal Cov. & 0.95\std{.01} & 0.95\std{.01} & 0.96\std{.02} & 0.96\std{.02}\\ \midrule
    Cond. Cov. (Input Feat.) & 0.94\std{.02} & 0.94\std{.03} & 0.95\std{.04} & 0.94\std{.05} \\ \midrule
    Cond. Cov. (Cluster) & 0.89\std{.06} & 0.93\std{.04} & 0.96\std{.03} & 0.96\std{.03} \\
    Cond. Cov. (Between) & 0.81\std{.06} & 0.95\std{.03} & 0.94\std{.05} & 0.94\std{.05} \\
    Cond. Cov. (PageRank) & 0.78\std{.06} & 0.94\std{.03} & 0.94\std{.05} & 0.94\std{.05}\\
    \bottomrule
    \end{tabular}
    \label{tab:conditional_coverage}
    }
\end{minipage}
\hspace{3mm}
\begin{minipage}[c]{0.36\textwidth}
\centering
    \captionof{table}{Ablation. For Size/length, we use Cora/Anaheim dataset with GCN backbone. Each experiment is with 10 independent base model runs with 100 conformal split runs.}
    \adjustbox{max width=0.95\textwidth}{
    \begin{tabular}{cc|cc}
    \toprule
    Topology & Ineff. & \multirow{2}{*}{Size} & \multirow{2}{*}{Length} \\ 
    -aware & Loss & & \\ \midrule
    \greencheck & \greencheck & 1.76\std{.27} &  2.17\std{.11} \\ \midrule
    \greencheck  & \redmark & 2.42\std{.35} & 2.23\std{.10}\\ 
    \redmark & \greencheck & 2.35\std{.47} & 2.32\std{.18}\\
    \redmark & \redmark & 3.80\std{.28} & 2.89\std{.39} \\ 
    \bottomrule
    \end{tabular}
    }
    \label{tab:ablation}
\end{minipage}

\end{table}

\xhdr{\mname empirically maintains conditional coverage} While \mname achieves marginal coverage, it is highly desirable to have a method that achieves reasonable conditional coverage, which was the motivation of APS and CQR. We follow~\cite{romano2020classification} to evaluate conditional coverage via the Worst-Slice (WS) coverage, which takes the worst coverage across slices in the feature space (\ie~node input features). We observe that \mname achieves a WS coverage close to $1-\alpha$, indicating satisfactory conditional coverage (Cond. Cov. (Input Feat.) row in Table~\ref{tab:conditional_coverage}). Besides the raw features, for each node, we also construct several network features (which are label agnostic) including clustering coefficients, betweenness, PageRank, closeness, load, and harmonic centrality, and then calculate the WS coverage over the network feature space. We observe close to 95\% WS coverage for various network features, suggesting \mname also achieves robust conditional coverage over network properties. We also see that the direct application of CP (\ie~without graph correction) has much smaller WS coverage for classification, suggesting that adjusting for neigborhood information in \mname implicitly improves conditional coverage.

\xhdr{Ablation} We conduct ablations in Table~\ref{tab:ablation} to test two main components in \mname, topology-aware correction model, and inefficiency loss. We first remove the inefficiency loss and replace it with standard prediction loss. The performance drops as expected, showing the power of directly modeling inefficiency loss in the correction step. 
Secondly, we replace the GNN correction model with an MLP correction model. The performance drops significantly, showing the importance of the design choice of correction model  and justifying our motivation on inefficiency correlation over networks.    

\xhdr{Parameter analysis} We conduct additional parameter analysis to test the robustness of \mname. We first adjust the target coverage rate and calculate the inefficiency (Figure~\ref{fig:param}(1)). \mname consistently beats the vanilla CP across all target coverages. Moreover, we adjust the fraction $\gamma$ of the holdout calibration data in building the inefficiency loss, and observe that \mname achieves consistent improvement in inefficiency (Figure~\ref{fig:param}(2)). We also observe a small fraction (10\%) leads to excellent performance, showing that our model only requires a small amount of data for the inefficiency loss and leaves the majority of the calibration data for downstream conformal prediction.

\begin{figure}[t]
    \centering
    \includegraphics[width=\textwidth]{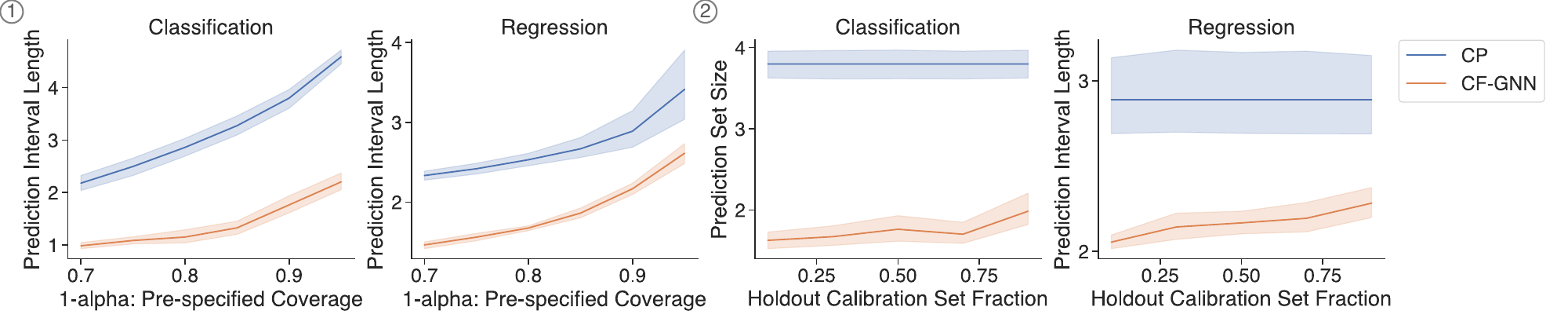}
    \caption{(1) Parameter analysis on inefficiency given different target coverage rate 1-$\alpha$. (2) Parameter analysis on inefficiency given calibration set holdout fraction. Analyses use Cora/Anaheim for classification/regression. }
    \label{fig:param}
\end{figure}

\section{Related Works}
\label{sec:related_work}

We discuss here related works that are closest to the ideas in \mname and provide extended discussion on other related works in Appendix~\ref{appendix:related_work}.

\underline{(1) Uncertainty quantification (UQ) for GNN:} Many UQ methods have been proposed to construct model-agnostic uncertain estimates for both classification~\cite{guo2017calibration,zhang2020mix,gupta2020calibration,kull2019beyond,abdar2021review} and regression~\cite{koenker2001quantile,takeuchi2006nonparametric,sheather1990kernel,gal2016dropout,lakshminarayanan2017simple,kuleshov2018accurate,ovadia2019can,kendall2017uncertainties,izmailov2021bayesian}. Recently, specialized calibration methods for GNNs that leverage network principles such as homophily have been developed~\cite{wang2021confident,hsu2022makes}. However, these UQ methods can fail to provide a statistically rigorous and empirically valid coverage guarantee (see Table~\ref{tab:cov}). In contrast, \mname achieves valid marginal coverage in both theory and practice. 

\underline{(2) Conformal prediction for GNN:} The application of CP to graph-structured data remains largely unexplored. \cite{clarkson2022distribution} claims that nodes in the graph are not exchangeable in the inductive setting and employs the framework of~\cite{barber2022conformal} to construct prediction sets using neighborhood nodes as the calibration data for mitigating the miscoverage due to non-exchangeability. In contrast, we study the transductive setting where certain exchangeability property holds; thus, the method from~\cite{barber2022conformal} are not comparable to ours. Concurrent with our work, \cite{pmlr-v202-h-zargarbashi23a} studies the exchangeability under transductive setting and proposes a diffusion-based method for improving efficiency, which can be viewed as an instantiation of our approach where the GNN correction learns an identity mapping; \cite{lunde2023conformal} studies  exchangeability  in network regression for of non-conformity scores based on various network structures, with similar observations as our Theorem~\ref{thm:exch}. Other recent efforts in conformal prediction for graphs include~\cite{lunde2023validity,marandon2023conformal} which focus on distinct  problem settings. 

\underline{(3) Efficiency of conformal prediction:} How to achieve desirable properties beyond validity is an active topic in the CP community; we focus on the efficiency aspect here. One line of work designs ``good'' non-conformity scores in theory such as APS~\cite{romano2020classification} and CQR~\cite{romano2019conformalized}. More recent works take another approach, by modifying the training process of the prediction model.
\mname falls into the latter case, although our idea applies to any non-conformity score. ConfTr~\cite{stutz2021learning} also modifies training for improved efficiency. Our approach differs from theirs in significant ways. First, we develop a theory on CP validity on the graph data and leverage topological principles that are specialized to graph to improve efficiency while ConfTr focuses on i.i.d. vision image data. Also, ConfTr happens during base model training using the training set, while \mname conducts post-hoc correction using withheld calibration data without assuming access to base model training, making ConfTr not comparable to us. Finally, we also propose a novel loss for efficiency in regression tasks.

\section{Conclusion}

In this work, we extend conformal prediction to GNNs by laying out the theoretical conditions for finite-sample validity and proposing a flexible graph-based CP framework to improve efficiency. Potential directions for future work  include generalizing the inefficiency loss to other desirable CP properties such as robustness and conditional coverage; extensions to inductive settings or transductive but non-random split settings; extensions to other graph tasks such as link prediction, community detection, and so on.

\section{Acknowledgements}
K.H. and J.L. gratefully acknowledge the support of
DARPA under Nos. HR00112190039 (TAMI), N660011924033 (MCS);
ARO under Nos. W911NF-16-1-0342 (MURI), W911NF-16-1-0171 (DURIP);
NSF under Nos. OAC-1835598 (CINES), OAC-1934578 (HDR), CCF-1918940 (Expeditions), 
NIH under No. 3U54HG010426-04S1 (HuBMAP),
Stanford Data Science Initiative, 
Wu Tsai Neurosciences Institute,
Amazon, Docomo, GSK, Hitachi, Intel, JPMorgan Chase, Juniper Networks, KDDI, NEC, and Toshiba. The content is solely the responsibility of the authors and does not necessarily represent the official views of the funding entities.

\bibliographystyle{plain}
\bibliography{refs}

\begin{thebibliography}{10}

\bibitem{abdar2021review}
Moloud Abdar, Farhad Pourpanah, Sadiq Hussain, Dana Rezazadegan, Li~Liu,
  Mohammad Ghavamzadeh, Paul Fieguth, Xiaochun Cao, Abbas Khosravi, U~Rajendra
  Acharya, et~al.
\newblock A review of uncertainty quantification in deep learning: Techniques,
  applications and challenges.
\newblock {\em Information Fusion}, 76:243--297, 2021.

\bibitem{angelopoulos2020uncertainty}
Anastasios Angelopoulos, Stephen Bates, Jitendra Malik, and Michael~I Jordan.
\newblock Uncertainty sets for image classifiers using conformal prediction.
\newblock {\em ICLR}, 2021.

\bibitem{barber2022conformal}
Rina~Foygel Barber, Emmanuel~J Cand{\`e}s, Aaditya Ramdas, and Ryan~J
  Tibshirani.
\newblock Conformal prediction beyond exchangeability.
\newblock {\em arXiv:2202.13415}, 2022.

\bibitem{bates2021distribution}
Stephen Bates, Anastasios Angelopoulos, Lihua Lei, Jitendra Malik, and Michael
  Jordan.
\newblock Distribution-free, risk-controlling prediction sets.
\newblock {\em Journal of the ACM (JACM)}, 68(6):1--34, 2021.

\bibitem{blondel2020fast}
Mathieu Blondel, Olivier Teboul, Quentin Berthet, and Josip Djolonga.
\newblock Fast differentiable sorting and ranking.
\newblock In {\em ICML}, pages 950--959. PMLR, 2020.

\bibitem{chernozhukov2010quantile}
Victor Chernozhukov, Iv{\'a}n Fern{\'a}ndez-Val, and Alfred Galichon.
\newblock Quantile and probability curves without crossing.
\newblock {\em Econometrica}, 78(3):1093--1125, 2010.

\bibitem{clarkson2022distribution}
Jase Clarkson.
\newblock Distribution free prediction sets for node classification.
\newblock {\em LoG}, 2022.

\bibitem{foygel2021limits}
Rina Foygel~Barber, Emmanuel~J Cand{\`e}s, Aaditya Ramdas, and Ryan~J
  Tibshirani.
\newblock The limits of distribution-free conditional predictive inference.
\newblock {\em Information and Inference: A Journal of the IMA},
  10(2):455--482, 2021.

\bibitem{gal2016dropout}
Yarin Gal and Zoubin Ghahramani.
\newblock Dropout as a bayesian approximation: Representing model uncertainty
  in deep learning.
\newblock In {\em ICML}, pages 1050--1059. PMLR, 2016.

\bibitem{gao2023topology}
Jiayi Gao, Jiaxing Li, Ke~Zhang, and Youyong Kong.
\newblock Topology uncertainty modeling for imbalanced node classification on
  graphs.
\newblock In {\em ICASSP 2023-2023 IEEE International Conference on Acoustics,
  Speech and Signal Processing (ICASSP)}, pages 1--5. IEEE, 2023.

\bibitem{gibbs2021adaptive}
Isaac Gibbs and Emmanuel Cand{\`e}s.
\newblock Adaptive conformal inference under distribution shift.
\newblock {\em NeurIPS}, 34, 2021.

\bibitem{gilmer2017neural}
Justin Gilmer, Samuel~S Schoenholz, Patrick~F Riley, Oriol Vinyals, and
  George~E Dahl.
\newblock Neural message passing for quantum chemistry.
\newblock In {\em ICML}, pages 1263--1272. JMLR. org, 2017.

\bibitem{guo2017calibration}
Chuan Guo, Geoff Pleiss, Yu~Sun, and Kilian~Q Weinberger.
\newblock On calibration of modern neural networks.
\newblock In {\em ICML}, pages 1321--1330. PMLR, 2017.

\bibitem{gupta2020calibration}
Kartik Gupta, Amir Rahimi, Thalaiyasingam Ajanthan, Thomas Mensink, Cristian
  Sminchisescu, and Richard Hartley.
\newblock Calibration of neural networks using splines.
\newblock {\em ICLR}, 2021.

\bibitem{pmlr-v202-h-zargarbashi23a}
Soroush H.~Zargarbashi, Simone Antonelli, and Aleksandar Bojchevski.
\newblock Conformal prediction sets for graph neural networks.
\newblock In {\em ICML}, 2023.

\bibitem{hamilton2017inductive}
Will Hamilton, Zhitao Ying, and Jure Leskovec.
\newblock Inductive representation learning on large graphs.
\newblock {\em NeurIPS}, 30, 2017.

\bibitem{hsu2022makes}
Hans Hao-Hsun Hsu, Yuesong Shen, Christian Tomani, and Daniel Cremers.
\newblock What makes graph neural networks miscalibrated?
\newblock {\em NeurIPS}, 2022.

\bibitem{ishimtsev2017conformal}
Vladislav Ishimtsev, Alexander Bernstein, Evgeny Burnaev, and Ivan Nazarov.
\newblock Conformal $ k $-nn anomaly detector for univariate data streams.
\newblock In {\em Conformal and Probabilistic Prediction and Applications},
  pages 213--227. PMLR, 2017.

\bibitem{izmailov2021bayesian}
Pavel Izmailov, Sharad Vikram, Matthew~D Hoffman, and Andrew Gordon~Gordon
  Wilson.
\newblock What are bayesian neural network posteriors really like?
\newblock In {\em ICML}, pages 4629--4640. PMLR, 2021.

\bibitem{jia2020residual}
Junteng Jia and Austion~R Benson.
\newblock Residual correlation in graph neural network regression.
\newblock In {\em KDD}, pages 588--598, 2020.

\bibitem{jin2022selection}
Ying Jin and Emmanuel~J Cand{\`e}s.
\newblock Selection by prediction with conformal p-values.
\newblock {\em arXiv:2210.01408}, 2022.

\bibitem{jin2023sensitivity}
Ying Jin, Zhimei Ren, and Emmanuel~J Cand{\`e}s.
\newblock Sensitivity analysis of individual treatment effects: A robust
  conformal inference approach.
\newblock {\em Proceedings of the National Academy of Sciences},
  120(6):e2214889120, 2023.

\bibitem{kendall2017uncertainties}
Alex Kendall and Yarin Gal.
\newblock What uncertainties do we need in bayesian deep learning for computer
  vision?
\newblock {\em NeurIPS}, 30, 2017.

\bibitem{kipf2016semi}
Thomas~N Kipf and Max Welling.
\newblock Semi-supervised classification with graph convolutional networks.
\newblock {\em ICLR}, 2017.

\bibitem{koenker2001quantile}
Roger Koenker and Kevin~F Hallock.
\newblock Quantile regression.
\newblock {\em Journal of Economic Perspectives}, 15(4):143--156, 2001.

\bibitem{kuleshov2018accurate}
Volodymyr Kuleshov, Nathan Fenner, and Stefano Ermon.
\newblock Accurate uncertainties for deep learning using calibrated regression.
\newblock In {\em ICML}, pages 2796--2804. PMLR, 2018.

\bibitem{kull2019beyond}
Meelis Kull, Miquel Perello~Nieto, Markus K{\"a}ngsepp, Telmo Silva~Filho, Hao
  Song, and Peter Flach.
\newblock Beyond temperature scaling: Obtaining well-calibrated multi-class
  probabilities with dirichlet calibration.
\newblock {\em NeurIPS}, 32, 2019.

\bibitem{lakshminarayanan2017simple}
Balaji Lakshminarayanan, Alexander Pritzel, and Charles Blundell.
\newblock Simple and scalable predictive uncertainty estimation using deep
  ensembles.
\newblock {\em NeurIPS}, 30, 2017.

\bibitem{lam2022graphcast}
Remi Lam, Alvaro Sanchez-Gonzalez, Matthew Willson, Peter Wirnsberger, Meire
  Fortunato, Alexander Pritzel, Suman Ravuri, Timo Ewalds, Ferran Alet, Zach
  Eaton-Rosen, et~al.
\newblock Graphcast: Learning skillful medium-range global weather forecasting.
\newblock {\em arXiv:2212.12794}, 2022.

\bibitem{lei2021conformal}
Lihua Lei and Emmanuel~J Cand{\`e}s.
\newblock Conformal inference of counterfactuals and individual treatment
  effects.
\newblock {\em Journal of the Royal Statistical Society: Series B (Statistical
  Methodology)}, 2021.

\bibitem{li2022graph}
Michelle~M Li, Kexin Huang, and Marinka Zitnik.
\newblock Graph representation learning in biomedicine and healthcare.
\newblock {\em Nature Biomedical Engineering}, pages 1--17, 2022.

\bibitem{lunde2023validity}
Robert Lunde.
\newblock On the validity of conformal prediction for network data under
  non-uniform sampling.
\newblock {\em arXiv preprint arXiv:2306.07252}, 2023.

\bibitem{lunde2023conformal}
Robert Lunde, Elizaveta Levina, and Ji~Zhu.
\newblock Conformal prediction for network-assisted regression.
\newblock {\em arXiv:2302.10095}, 2023.

\bibitem{marandon2023conformal}
Ariane Marandon.
\newblock Conformal link prediction to control the error rate.
\newblock {\em arXiv preprint arXiv:2306.14693}, 2023.

\bibitem{ovadia2019can}
Yaniv Ovadia, Emily Fertig, Jie Ren, Zachary Nado, David Sculley, Sebastian
  Nowozin, Joshua Dillon, Balaji Lakshminarayanan, and Jasper Snoek.
\newblock Can you trust your model's uncertainty? evaluating predictive
  uncertainty under dataset shift.
\newblock {\em NeurIPS}, 32, 2019.

\bibitem{romano2019conformalized}
Yaniv Romano, Evan Patterson, and Emmanuel Cand{\`e}s.
\newblock Conformalized quantile regression.
\newblock {\em NeurIPS}, 32, 2019.

\bibitem{romano2020classification}
Yaniv Romano, Matteo Sesia, and Emmanuel Cand{\`e}s.
\newblock Classification with valid and adaptive coverage.
\newblock {\em NeurIPS}, 33:3581--3591, 2020.

\bibitem{sheather1990kernel}
Simon~J Sheather and James~Stephen Marron.
\newblock Kernel quantile estimators.
\newblock {\em Journal of the American Statistical Association},
  85(410):410--416, 1990.

\bibitem{stadler2021graph}
Maximilian Stadler, Bertrand Charpentier, Simon Geisler, Daniel Z{\"u}gner, and
  Stephan G{\"u}nnemann.
\newblock Graph posterior network: Bayesian predictive uncertainty for node
  classification.
\newblock {\em Advances in Neural Information Processing Systems},
  34:18033--18048, 2021.

\bibitem{stutz2021learning}
David Stutz, Ali~Taylan Cemgil, Arnaud Doucet, et~al.
\newblock Learning optimal conformal classifiers.
\newblock {\em ICLR}, 2022.

\bibitem{takeuchi2006nonparametric}
Ichiro Takeuchi, Quoc Le, Timothy Sears, Alexander Smola, et~al.
\newblock Nonparametric quantile estimation.
\newblock 2006.

\bibitem{velivckovic2017graph}
Petar Veli{\v{c}}kovi{\'c}, Guillem Cucurull, Arantxa Casanova, Adriana Romero,
  Pietro Lio, and Yoshua Bengio.
\newblock Graph attention networks.
\newblock {\em ICLR}, 2018.

\bibitem{vovk2005algorithmic}
Vladimir Vovk, Alexander Gammerman, and Glenn Shafer.
\newblock {\em Algorithmic learning in a random world}.
\newblock Springer Science \& Business Media, 2005.

\bibitem{wang2021confident}
Xiao Wang, Hongrui Liu, Chuan Shi, and Cheng Yang.
\newblock Be confident! towards trustworthy graph neural networks via
  confidence calibration.
\newblock {\em NeurIPS}, 34:23768--23779, 2021.

\bibitem{wu2019simplifying}
Felix Wu, Amauri Souza, Tianyi Zhang, Christopher Fifty, Tao Yu, and Kilian
  Weinberger.
\newblock Simplifying graph convolutional networks.
\newblock In {\em ICML}, pages 6861--6871. PMLR, 2019.

\bibitem{wu2022graph}
Shiwen Wu, Fei Sun, Wentao Zhang, Xu~Xie, and Bin Cui.
\newblock Graph neural networks in recommender systems: a survey.
\newblock {\em ACM Computing Surveys}, 55(5):1--37, 2022.

\bibitem{yin2022conformal}
Mingzhang Yin, Claudia Shi, Yixin Wang, and David~M Blei.
\newblock Conformal sensitivity analysis for individual treatment effects.
\newblock {\em Journal of the American Statistical Association}, pages 1--14,
  2022.

\bibitem{zaffran2022adaptive}
Margaux Zaffran, Olivier F{\'e}ron, Yannig Goude, Julie Josse, and Aymeric
  Dieuleveut.
\newblock Adaptive conformal predictions for time series.
\newblock In {\em ICML}, pages 25834--25866. PMLR, 2022.

\bibitem{zhang2020mix}
Jize Zhang, Bhavya Kailkhura, and T~Yong-Jin Han.
\newblock Mix-n-match: Ensemble and compositional methods for uncertainty
  calibration in deep learning.
\newblock In {\em ICML}, pages 11117--11128. PMLR, 2020.

\bibitem{zhao2020uncertainty}
Xujiang Zhao, Feng Chen, Shu Hu, and Jin-Hee Cho.
\newblock Uncertainty aware semi-supervised learning on graph data.
\newblock {\em Advances in Neural Information Processing Systems},
  33:12827--12836, 2020.

\end{thebibliography}

\clearpage

\appendix

\section{Deferred details for Section~\ref{sec:theory}}
\label{app:theory}

\subsection{Technical proofs for theoretical results}
\label{app:proof}

\begin{proof}[Proof of Lemma~\ref{lem:invariant}]
Hereafter, we condition on the entire unordered graph, all the attribute and label information, and the index sets $\cD_\train$ and $\cD_{\textrm{ct}}$. 
We define the scores evaluated at the original node indices as 
\$
v_v =  V(\mathbf{x}_v,y_v; \{z_i\}_{i\in\cD_{\train}\cup\cD_{\textrm{valid}}}, \{\mathbf{x}_v\}_{v\in \cD_\calib\cup\cD_\test}, \cV, \cE),\quad v\in \cD_\calib\cup \cD_\test \subseteq \cV.
\$
By Assumption~\ref{assump:permute}, for any permutation $\pi$ of  $\cD_{\textrm{ct}}$, we always have
\$
v_v =  V(\mathbf{x}_v,y_v; \{z_i\}_{i\in\cD_{\train}\cup\cD_{\textrm{valid}}}, \{\mathbf{x}_{\pi(v)}\}_{v\in \cD_\calib\cup\cD_\test}, \cV_\pi, \cE_{\pi}).
\$
That is, 
given $\cD_{\textrm{ct}}$, the evaluated score at any $v\in \cD_{\textrm{ct}}$ remains invariant no matter which subset of $\cD_{\textrm{ct}}$ is designated as $\cD_\calib$. 
This implies that the scores are fixed after conditioning:
\$
[V_1, \dots,V_{n+m}] = [ v_v]_{v\in \cD_{\textrm{ct}}},
\$
where we use $[]$ to emphasize  unordered sets. 
Thus, the calibration scores $\{V_i\}_{i=1}^n$ is a subset of size $n$ of $ [ v_v]_{v\in \cD_{\textrm{ct}}}$.
Note that under random splitting in the transductive setting, any permutation $\pi$ of $\cD_{\textrm{ct}}$ occurs with the same probability, which gives the conditional probability in Lemma~\ref{lem:invariant}.
\end{proof}

\begin{proof}[Proof of Theorem~\ref{thm:exch}] 
Throughout this proof, we condition on the entire unordered graph, all the attribute and label information, and the index sets $\cD_\train$ and $\cD_{\textrm{ct}}$. 
By Lemma~\ref{lem:invariant}, after the conditioning, the unordered set of 
$\{V_i\}_{i=1}^{n+m}$ is fixed as $[v_v]_{v\in \cD_{\textrm{ct}}}$, and  $\{V_i\}_{i=1}^n$ is a simple random sample from $[v_v]_{v\in \cD_{\textrm{ct}}}$. As a result, any test sample $V(X_{n+j},Y_{n+j})$, $j=1,\dots,m$ is exchangeable with $\{V_i\}_{i=1}^n$. 
By standard theory for conformal prediction~\cite{vovk2005algorithmic}, this ensures \emph{valid marginal coverage}, i.e., 
$\PP(Y_{n+j}\in \hat{C}(X_{n+j}))\geq 1-\alpha$,  
where the expectation is over all the randomness. 
 
We now proceed to analyze the distribution of $\hat{\mathrm{Cover}}$. For notational convenience, we write $N=m+n$, and view $\cD_{\textrm{ct}}$ as the `population'. 
In this way, $\{V_i\}_{i=1}^n$ is a simple random sample from $[v_v]_{v\in \cD_{\textrm{ct}}}$. 
For every $\eta\in \RR$, we define the `population' cumulative distribution function (c.d.f.)
\$
F(\eta) = \frac{1}{N}\sum_{v\in \cD_{\textrm{ct}}} \ind\{v_v\leq \eta\},
\$
which is a deterministic function. 
We also define the calibration c.d.f.~as 
\$
\hat{F}_n(\eta) = \frac{1}{n}\sum_{v\in \cD_\calib} \ind\{v_v\leq \eta\} = \frac{1}{n}\sum_{i=1}^n \ind\{V_i\leq \eta\},
\$
which is random, and its randomness comes from which subset of $\cD_{\textrm{ct}}$ is $\cD_\calib$. 
By definition, 
\$
\hat\eta = \inf\{ \eta\colon \hat{F}_n(\eta) \geq (1-q)(1+1/n)  \}.
\$
Since the scores have no ties, we know 
\$
\hat{F}_n(\hat\eta) = \lceil (1-q)(n+1)\rceil /n.
\$
The test-time coverage can be written as 
\$
\hat{\rm Cover} & = \frac{1}{m}\sum_{j=1}^{m} \ind\{V_{n+j}\leq \hat\eta\} \\
&= \frac{1}{N-n}\bigg(\sum_{v\in \cD_{\textrm{ct}} } \ind\{v_v\leq \hat\eta) - \sum_{v\in \cD_\calib} \ind\{v_v\leq \hat\eta) \bigg)\\
&= \frac{N}{N-n} F(\hat\eta) - \frac{n}{N-n} \hat{F}_n(\hat\eta) 
= \frac{N}{N-n} F(\hat\eta) - \frac{\lceil (1-q)(n+1)\rceil}{N-n}.
\$
Now we characterize the distribution of $\hat\eta$. 
For any $\eta\in \RR$, by the definition of $\hat\eta$, 
\$
\PP(\hat\eta \leq \eta ) = \PP\big( n \hat{F}_n(\eta) \geq (n+1)(1-q)\big)
= \PP\big( n \hat{F}_n(\eta) \geq \lceil (n+1)(1-q)\rceil \big).
\$
Note that $n\hat{F}_n(\eta) = \sum_{v\in \cD_\calib} \ind\{v_v\leq \eta\}$ is the count of 
data in $\cD_\calib$ such that the score is below $\eta$. 
By the simple random sample (i.e., sampling without replacement), $n\hat{F}_n(\eta)$ follows 
a hyper-geometric distribution with parameter 
$N, n, NF(\eta)$. That is, 
\$
\PP( n\hat{F}_n(\eta) = k) = \frac{\binom{NF(\eta)}{k} \binom{N-NF(\eta)}{n-k}}{\binom{N}{n}}, \quad 0\leq k\leq NF(\eta).
\$
Denoting the c.d.f.~of hypergeometric distribution as 
$\Phi_{\rm HG}(k; N, n, NF(\eta))$, we have 
\$
\PP(\hat\eta \leq \eta )  = 1 - \Phi_{\rm HG}\big(\lceil (n+1)(1-q)\rceil-1; N, n, NF(\eta) \big).
\$
Then, for any $t\in [0,1]$, 
\$
\PP\big(  \hat{\rm Cover} \leq t  \big) 
&= \PP\Bigg(  \frac{N}{N-n} F(\hat\eta) - \frac{\lceil (1-q)(n+1)\rceil}{N-n}\leq t \Bigg) \\ 
&= \PP\Bigg(    F(\hat\eta)   \leq \frac{\lceil (1-q)(n+1)\rceil +  (N-n)t}{N} \Bigg).
\$
Since $F(\cdot)$ is monotonely increasing, 
\$
\PP\big(  \hat{\rm Cover} \leq t  \big) 
= \PP\Bigg(    \hat\eta    \leq F^{-1}\bigg(\frac{\lceil (1-q)(n+1)\rceil +  (N-n)t}{N} \bigg) \Bigg),
\$
where $F^{-1}(s)=\inf\{\eta\colon F(\eta)\geq s\}$ for any $s\in [0,1]$. 
Plugging in the previous results on the distribution of $\hat\eta$, 
we have 
\$
\PP\big(  \hat{\rm Cover} \leq t  \big) 
&= 1 - \Phi_{\rm HG}\bigg(\lceil (n+1)(1-q)\rceil-1; N, n, NF\Big({\textstyle F^{-1}\big(\frac{\lceil (1-q)(n+1)\rceil +  (N-n)t}{N} \big)}\Big) \bigg) \\
&= 1 - \Phi_{\rm HG}\bigg(\lceil (n+1)(1-q)\rceil-1; N, n, N  {\textstyle   \frac{\lceil \lceil (1-q)(n+1)\rceil +  (N-n)t\rceil }{N} }   \bigg) \\
&= 1 - \Phi_{\rm HG}\bigg(\lceil (n+1)(1-q)\rceil-1; N, n,     \lceil \lceil (1-q)(n+1)\rceil +  (N-n)t\rceil     \bigg)\\
&= 1 - \Phi_{\rm HG}\bigg(\lceil (n+1)(1-q)\rceil-1; N, n,  \lceil (1-q)(n+1)\rceil +     \lceil (N-n)t\rceil     \bigg)
\$
where the second equality uses the fact that $F(\eta)\in\{0,1/N,\dots,(N-1)/N,1\}$, hence $F(F^{-1}(s))=\lceil Ns\rceil/N$ for any $s\in[0,1]$. By tower property, such an equation also holds for the unconditional distribution, marginalized over all the randomness. This completes the proof of Theorem~\ref{thm:exch}.
\end{proof}

\subsection{Additional visualization of test-time coverage}\label{app:figures}

In this part, we provide more visualization of the distributions of test time coverage $\hat{\textrm{Cover}}$ under various sample size configurations. We note that such results also apply to standard application of split conformal prediction when the non-conformity score function $V$ is independent of calibration and test samples, so that Assumption~\ref{assump:permute} is satisfied.

Figures~\ref{appendix:pdf_n_005} and~\ref{appendix:pdf_n_01} plot the p.d.f.~of $\hat{\textrm{Cover}}$ for $\alpha=0.05$ and $\alpha=0.1$, respectively, when fixing $n$ and varying the test sample size $m$. The $y$-axis is obtained by computing $\PP(t_{k-1}< \hat{\textrm{Cover}}\leq t_k)/(t_k-t_{k-1})$ at $x=(t_{k-1}+t_k)/2$ for a sequence of evenly spaced $\{t_k\}\in [0,1]$. All figures in this paper for p.d.f.s are obtained in the same way. We see that $\hat{\textrm{Cover}}$ concentrates more tightly around the target value $1-\alpha$ as $m$ and $n$ increases. 

Figures~\ref{appendix:pdf_N_005} and~\ref{appendix:pdf_N_01} plot the p.d.f.~of $\hat{\textrm{Cover}}$ for $\alpha=0.05$ and $\alpha=0.1$, respectively, where we fix $N=m+n$ but vary the calibration sample size $n$. 
This mimics the situation where the total number of nodes on the graph is fixed, while we may have flexibility in collecting data as the calibration set. We observe a tradeoff between the calibration accuracy determined by $n$ and the test-sample concentration determined by $n$. The distribution of $\hat{\textrm{Cover}}$ is more concentrated  around $1-\alpha$ when $m$ and $n$ are relatively balanced.

\begin{figure}[htbp]
    \centering
    \begin{subfigure}[t]{0.32\linewidth}
    \centering
    \includegraphics[width=\linewidth]{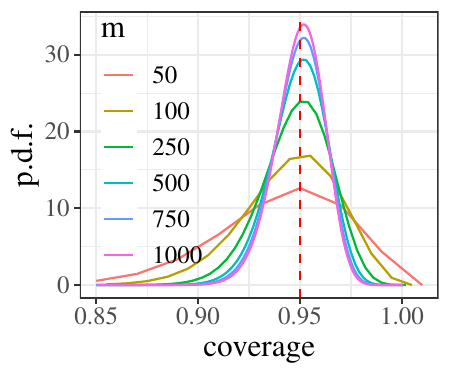}  
\end{subfigure} 
     \begin{subfigure}[t]{0.32\linewidth}
    \centering
    \includegraphics[width=\linewidth]{FIGS/pdf_1000}  
\end{subfigure} 
\begin{subfigure}[t]{0.32\linewidth}
    \centering
    \includegraphics[width=\linewidth]{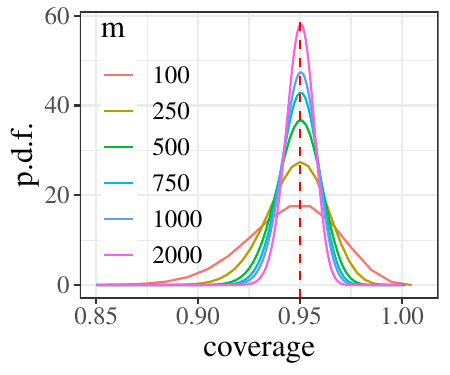}  
\end{subfigure} 
    \caption{P.d.f.~of test-time coverage $\hat{\textrm{Cover}}$ for $n=500$ (left), $1000$ (middle), $2000$ (right) and $\alpha=0.05$ with curves representing different values of $m$, the test sample size.}
    \label{appendix:pdf_n_005}
\end{figure}

\begin{figure}[htbp]
    \centering
    \begin{subfigure}[t]{0.32\linewidth}
    \centering
    \includegraphics[width=\linewidth]{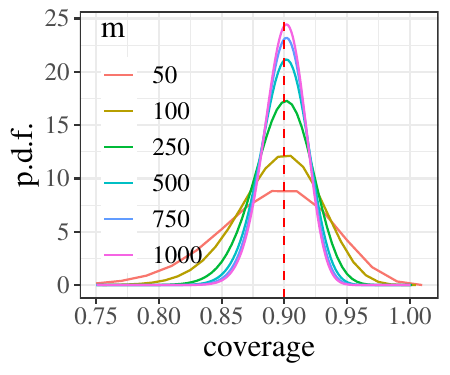}  
\end{subfigure} 
     \begin{subfigure}[t]{0.32\linewidth}
    \centering
    \includegraphics[width=\linewidth]{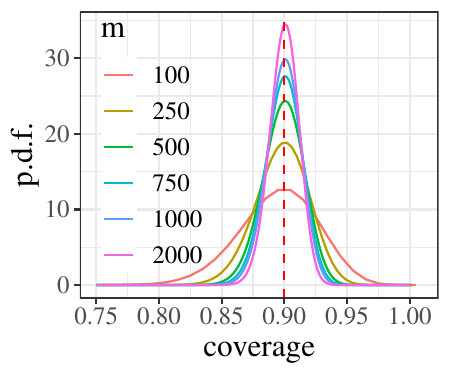}  
\end{subfigure} 
\begin{subfigure}[t]{0.32\linewidth}
    \centering
    \includegraphics[width=\linewidth]{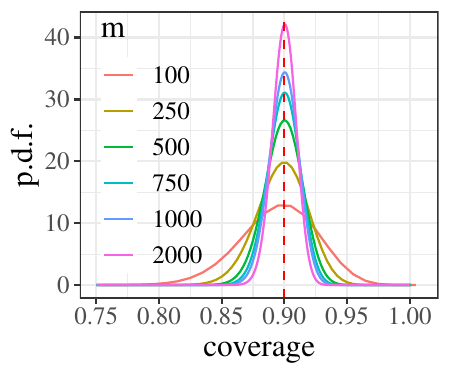}  
\end{subfigure} 
    \caption{P.d.f.~of test-time coverage $\hat{\textrm{Cover}}$ for $n=500$ (left), $1000$ (middle), $2000$ (right) and $\alpha=0.1$ with curves representing different values of $m$, the test sample size.}
    \label{appendix:pdf_n_01}
\end{figure}

\begin{figure}[htbp]
    \centering
    \begin{subfigure}[t]{0.32\linewidth}
    \centering
    \includegraphics[width=\linewidth]{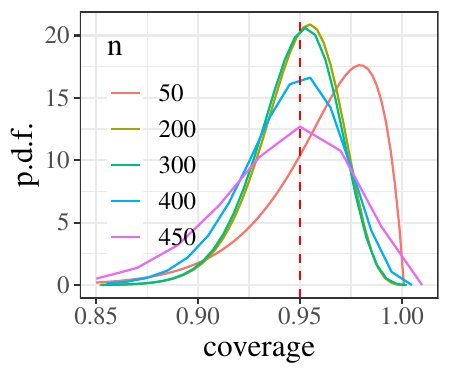}  
\end{subfigure} 
     \begin{subfigure}[t]{0.32\linewidth}
    \centering
    \includegraphics[width=\linewidth]{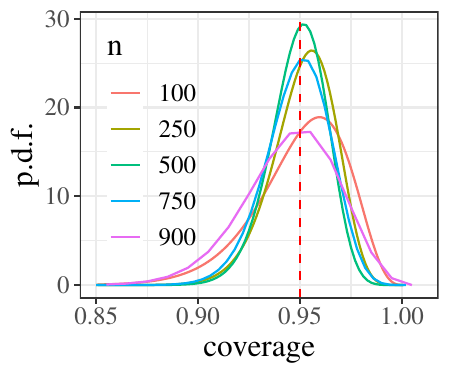}  
\end{subfigure} 
\begin{subfigure}[t]{0.32\linewidth}
    \centering
    \includegraphics[width=\linewidth]{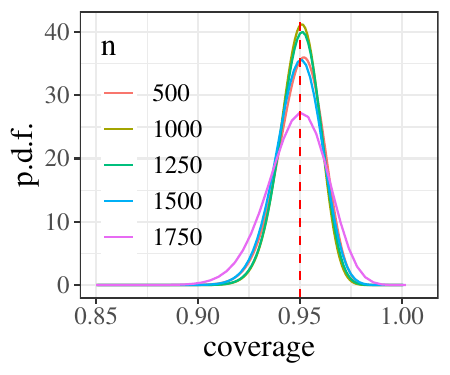}  
\end{subfigure} 
    \caption{P.d.f.~of test-time coverage $\hat{\textrm{Cover}}$ for $N=m+n=500$ (left), $1000$ (middle), $2000$ (right) and $\alpha=0.05$ with curves representing different values of $n$, the calibration sample size.}
    \label{appendix:pdf_N_005}
\end{figure}

\begin{figure}[htbp]
    \centering
    \begin{subfigure}[t]{0.32\linewidth}
    \centering
    \includegraphics[width=\linewidth]{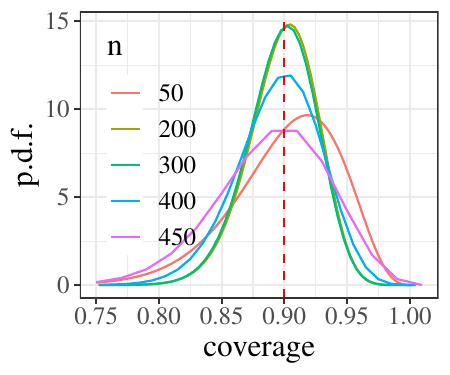}  
\end{subfigure} 
     \begin{subfigure}[t]{0.32\linewidth}
    \centering
    \includegraphics[width=\linewidth]{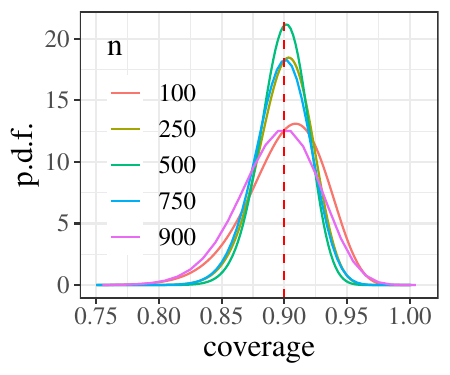}  
\end{subfigure} 
\begin{subfigure}[t]{0.32\linewidth}
    \centering
    \includegraphics[width=\linewidth]{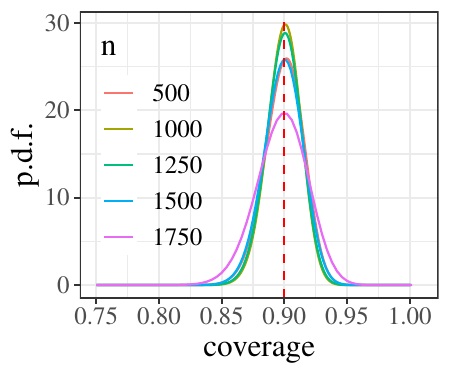}  
\end{subfigure} 
    \caption{P.d.f.~of test-time coverage $\hat{\textrm{Cover}}$ for $N=m+n=500$ (left), $1000$ (middle), $2000$ (right) and $\alpha=0.1$ with curves representing different values of $n$, the calibration sample size.}
    \label{appendix:pdf_N_01}
\end{figure}

\section{Discussion on full conformal prediction, split conformal prediction} \label{appendix:full_vs_split}

In this part, we discuss  the relation of our application of conformal prediction to full conformal prediction and  split conformal prediction, two prominent conformal prediction methods proposed by Vovk and his coauthors in~\cite{vovk2005algorithmic}. Split conformal prediction is mostly widely used due to its computational efficiency, where exchangeability is usually ensured by independence (which is not obvious for graph data) as we discussed briefly in the introduction.

Full conformal prediction (FCP) is arguably the most versatile form of conformal prediction. Given calibration data $Z_i = (X_i,Y_i)\in \cX\times\cY$, $i=1,\dots,n$, and given a test point $X_{n+1}\in\cX$ whose outcome $Y_{n+1}\in \cY$ is unknown, at every hypothesized value $y\in \cY$, FCP uses any algorithm $S$  to train the following scores
\$
S_i^y &= S(X_i,Y_i; Z_1,\dots,Z_{i-1},Z_{i+1},\dots,Z_n, (X_{n+1},y)),\quad i=1,\dots,n,
\$
where $S$ is symmetric in the arguments $Z_1,\dots,Z_{i-1},Z_{i+1},\dots,Z_n, (X_{n+1},y)$, as well as 
\$
S_{n+1}^y &= S(X_{n+1},y; Z_1,\dots,Z_n).
\$

Here, for $1\leq i\leq n$, $S_i^y$ intuitively measures how well the observation $(X_i,Y_i)$ conforms to the observations $Z_1,\dots,Z_{i-1},Z_{i+1},\dots,Z_n, (X_{n+1},y)$ with the hypothesized value of $y$. For instance, when using a linear prediction model, it can be chosen as the prediction residual 
\$
S_i^y =| Y_i - X_i^\top \hat\theta^y|,
\$
where $\hat\theta^y$ is the ordinary least squares coefficient by a linear regression of $Y_1,\dots,Y_{i-1},Y_{i+1},\dots,Y_n,y$ over $X_1,\dots,X_{i-1},X_{i+1},\dots,X_n,X_{n+1}$. 
More generally, one may train a prediction model $\hat\mu^y\colon \cX\to \cY$ using $Z_1,\dots,Z_{i-1},Z_{i+1},\dots,Z_n, (X_{n+1},y)$, and set $S_i^y =| Y_i-\hat\mu^y(X_i)|$. 
For a confidence level $\alpha\in(0,1)$, the FCP prediction set is then 
\$
\hat{C}(X_{n+1}) := \Big\{y \colon \textstyle{\frac{1+ \ind\{S_i^y > S_{n+1}^y\}}{n+1}} \leq \alpha\Big\}.
\$

Since the original form of FCP involves training $n+1$ models at each hypothesized value $y$, its computation can be very intense. It is thus impractical to directly apply FCP to GNN models (i.e., imagining $S$ as the GNN training process on the entire graph with a hypothesized outcome $y$). 

Split conformal prediction (SCP) is a computationally-efficient special case of FCP that is most widely used for i.i.d.~data. The idea is to set aside an independent fold of data to output a single trained model. To be specific, we assume access to a given non-conformity score $V\colon \cX\times\cY\to \RR$, i.i.d.~calibration data $Z_i=(X_i,Y_i)_{i=1}^n$, and an independent test sample $(X_{n+1},Y_{n+1})$ from the same distribution with $Y_{n+1}$ unobserved. 
Here by a ``given'' score, we mean that it is obtained without knowing the calibration and test sample; usually, it is trained on an independent set of data $\{(X_j,Y_j)\}_{j\in \cD_\train}$ before seeing the calibration and test sample. 
Then define $V_i=V(X_i,Y_i)$ for $i=1,\dots,n$. The SCP prediction set is 
\$
\hat{C}(X_{n+1}) = \Big\{y \colon \textstyle{\frac{1+ \ind\{V_i  > V(X_{n+1},y)\}}{n+1}} \leq \alpha\Big\}.
\$
The above set is usually convenient to compute, because we only need one single model to obtain $V$. The validity of SCP usually relies on the independence of $V$ to calibration and test data as we mentioned in the introduction. However, the application of SCP to GNN model is also not straightforward: as we discussed in the main text, the model training step already uses the calibration and test samples, and the nodes are correlated.

Indeed, our method can be seen as a middle ground between FCP and SCP: it only requires one single prediction model as SCP does, but allows to use calibration and test data in the training step as FCP does. 
In our method introduced in the main text, there exists a fixed function $V\colon \cY\times\cY\to \RR$ (provided by APS and CQR) such that 
\$
S_i^y = V(\hat\mu(X_i),Y_i),\quad 
S_{n+1}^y = V(\hat\mu(X_{n+1},y),
\$
where $\hat\mu$ is the final output from the second GNN model whose training process does not utilize the outcomes $Y_1,\dots,Y_n$ and $y$, but uses the features $X_1,\dots,X_n$ and $X_{n+1}$.

\section{Algorithm overview} \label{appendix:algorithm}

We describe the pseudo-code of \mname in Algorithm~\ref{algo1}.

\begin{algorithm}[h]
\SetAlgoLined
\DontPrintSemicolon
\textbf{Input}: Graph $G = (\mathcal{V}, \mathcal{E}, \mathbf{X})$; a trained base GNN model $\mathrm{GNN}_\theta$; non-conformity score function $V(X, Y)$; pre-specified mis-coverage rate $\alpha$, Randomly initialized $\vartheta$ for the conformal correction model $\mathrm{GNN}_\vartheta$.

\While{not done}{
\For{i in \{1, ..., $|\mathcal{V}_\mathrm{cor-calib} \cup \mathcal{V}_\mathrm{cor-test}|$\}}{
$\hat{\mu}(X_i) = \mathrm{GNN}_\theta(\mathbf{X_i}, G)$ \tcp*{Base GNN output scores} 
$\Tilde{\mu}(X_i) = \mathrm{GNN}_\vartheta(\hat{\mu}(X_i), G)$ \tcp*{Correction model output scores}
}
$n, m = |\mathcal{V}_\mathrm{cor-calib}|, |\mathcal{V}_\mathrm{cor-test}|$ \tcp*{Size of correction calib/test set} 
$\hat{\alpha} = \frac{1}{n+1} * \alpha$ \tcp*{Finite-sample correction} 
$\hat{\eta} = \mathrm{DiffQuantile}(\{V(X_i, Y_i) | i \in \mathcal{V}_\mathrm{cor-calib} \})$  \tcp*{Compute non-conformity scores} 
\If{$\mathrm{Classification}$}{
$\mathcal{L}_\mathrm{Ineff} =  \frac{1}{m} \sum_{i \in \mathcal{V}_\mathrm{cor-test}} \frac{1}{|\mathcal{Y}|}\sum_{k \in \mathcal{Y}}\sigma(\frac{V(X_i, k) - \hat{\eta}}{\tau})$ \\ \tcp*{Inefficiency proxy for classification tasks}
}
\If{$\mathrm{Regression}$}{
$\mathcal{L}_\mathrm{Ineff} =  \frac{1}{m} \sum_{i \in \mathcal{V}_\mathrm{cor-test}} (\tilde{\mu}_{1-\alpha/2}(X)_i + \hat{\eta}) - (\tilde{\mu}_{\alpha/2}(X)_i - \hat{\eta})$ \\ \tcp*{Inefficiency proxy for regression tasks} 
$\mathcal{L}_\mathrm{Ineff} += \gamma \frac{1}{m} \sum_{i \in \mathcal{V}_\mathrm{cor-test}}  (\tilde{\mu}_{1-\alpha/2}(X)_i - \hat{\mu}_{1-\alpha/2}(X)_i)^2 + (\tilde{\mu}_{\alpha/2}(X)_i - \hat{\mu}_{\alpha/2}(X)_i)^2$ \\ \tcp*{Consistency regularization term}
}
$\vartheta \leftarrow \vartheta - \nabla_\vartheta \mathcal{L}_\mathrm{Ineff}$ \tcp*{Optimizing $\vartheta$ to reduce inefficiency}
}

\caption{Pseudo-code for \mname algorithm.}
\label{algo1}
\end{algorithm}

\section{Deferred details for experiments}

\subsection{Hyperparameters} \label{appendix:hyperparam}
Table~\ref{tab:hyperparam} reports our set of hyperparameter ranges. We conduct 100 iterations of Bayesian Optimization for \mname with the validation set inefficiency proxy as the optimization metric. To avoid overfitting, each iteration only uses the first GNN run. The optimized hyperparameters are then used for all 10 GNN runs and we then reported the average and standard deviation across runs. Each experiment is done with a single NVIDIA 2080 Ti RTX 11GB GPU.

\begin{table}[h]
    \centering
    \caption{Hyperparameter range for \mname.}
    \begin{tabular}{l|l|l}
    \toprule
    Task & Param. & Range \\ \midrule
    \multirow{5}{*}{Classification}  & $\mathrm{GNN}_\vartheta$ Hidden dimension  & [16,32,64,128,256] \\
    & Learning rate & [1e-1, 1e-2, 1e-3, 1e-4] \\
    & $\mathrm{GNN}_\vartheta$ Number of GNN Layers & [1,2,3,4] \\
    & $\mathrm{GNN}_\vartheta$ Base Model & [GCN, GAT, GraphSAGE, SGC]\\
    & $\tau$ & [10, 1, 1e-1, 1e-2, 1e-3] \\ \midrule
    \multirow{5}{*}{Regression}  & $\mathrm{GNN}_\vartheta$ Hidden dimension  & [16,32,64,128,256] \\
    & Learning rate & [1e-1, 1e-2, 1e-3, 1e-4] \\
    & $\mathrm{GNN}_\vartheta$ Number of GNN Layers & [1, 2, 3, 4] \\
    & $\mathrm{GNN}_\vartheta$ Base Model & [GCN, GAT, GraphSAGE, SGC]\\
    & Reg. loss coeff. $\gamma$ & [1, 1e-1] \\ \bottomrule
    \end{tabular}
    \label{tab:hyperparam}
\end{table}

\subsection{Baseline Details}\label{appendix:baseline}

We report the details about baselines below and the hyperparameter range in Table~\ref{tab:hyperparam_baseline}.

\begin{enumerate}
    \item Temperature Scaling~\cite{guo2017calibration} divides the logits with a learnable scalar. It is optimized over NLL loss in the validation set. 
    \item Vector Scaling~\cite{guo2017calibration} has a scalar to scale the logits for each class dimension and adds an additional classwide bias. It is optimized over NLL loss in the validation set. 
    \item Ensemble Temperature Scaling~\cite{zhang2020mix} learns an ensemble of uncalibrated, temperature-scaled calibrated calibrators. 
    \item CaGCN~\cite{wang2021confident} uses an additional GCN model that learns a temperature scalar for each node based on its neighborhood information. 
    \item GATS~\cite{hsu2022makes} identifies five factors that affect GNN calibration and designs a model that accounts for these five factors by using per-node temperature scaling and attentive aggregation from the local neighborhood.
    \item QR~\cite{koenker2001quantile} uses a pinball loss to produce quantile scores. It is CQR without the conformal prediction adjustment. 
    \item MC dropout~\cite{gal2016dropout} turns on dropout during evaluation and produces $K$ predictions. We then take the 95\% quantile of the predicted distribution. We also experimented with taking a 95\% confidence interval but 95\% quantile has better coverage, thus we adopt the quantile approach.
    \item BayesianNN~\cite{kendall2017uncertainties} model the label with normal distribution and the model produces two heads, where one corresponds to the mean and the second log variance. We then calculate the standard deviation as the square root of the exponent of log variance. Then we take the [mean-1.96*standard deviation, mean+1.96*standard deviation] for the 95\% interval. 
\end{enumerate}

\begin{table}[h]
    \centering
    \caption{Hyperparameter range for baselines.}
    \begin{tabular}{l|l|l}
    \toprule
    Baseline & Param. & Range \\ \midrule
    Temperature Scaling  & No hyperparameter  & Not Applicable \\ \midrule
    Vector Scaling & No hyperparameter  & Not Applicable \\ \midrule
    Ensemble Temp Scaling & No hyperparameter  & Not Applicable \\ \midrule
    \multirow{4}{*}{CaGCN} & Dropout & [0.3, 0.5, 0.7] \\
    & Hidden dimension & [16, 32, 64, 128, 256] \\
    & Number of GNN Layers & [1,2,3,4] \\ 
    & Weight Decay & [0, 1e-3, 1e-2, 1e-1] \\  \midrule
    \multirow{4}{*}{GATS} & Dropout & [0.3, 0.5, 0.7] \\
    & Hidden dimension & [16, 32, 64, 128, 256] \\
    & Number of GNN Layers & [1,2,3,4] \\
    & Weight Decay & [0, 1e-3, 1e-2, 1e-1] \\  \midrule
    MC Dropout & Number of Predictions & [100, 500, 1,000] \\ \midrule
    BayesianNN & No hyperparameter  & Not Applicable \\
    \bottomrule
    
    \end{tabular}
    \label{tab:hyperparam_baseline}
\end{table}

\subsection{Dataset} \label{appendix:dataset}

For node classification, we use the common node classification datasets in Pytorch Geometric package. For node regression, we use datasets in~\cite{jia2020residual}. We report the dataset statistics at Table~\ref{appendix:tab_data}.  

\begin{table}[h]
    \centering
    \caption{Dataset statistics.}
    \adjustbox{max width=0.95\textwidth}{
    \begin{tabular}{l|l|l|llll}
    \toprule
    Domain & Dataset & Task & \# Nodes & \# Edges & \# Features & \# Labels \\ \midrule
    \multirow{4}{*}{Citation} & Cora & Classification & 2,995 & 16,346 & 2,879 & 7 \\
    & DBLP & Classification & 17,716 & 105,734 & 1,639 & 4 \\
    & CiteSeer & Classification & 4,230 & 10,674 & 602 & 6 \\
    & PubMed & Classification & 19,717 & 88,648 & 500 & 3 \\\cline{1-1}
    \multirow{2}{*}{Co-purchase}&Computers & Classification & 13,752 & 491,722 & 767 & 10 \\
    &Photos & Classification & 7,650 & 238,162 & 745 & 8 \\\cline{1-1}
    \multirow{2}{*}{Co-author}&CS & Classification & 18,333 & 163,788 & 6,805 & 15 \\
    &Physics & Classification & 34,493 & 495,924 & 8,415 & 5 \\\midrule%
    \multirow{2}{*}{Transportation}&Anaheim & Regression & 914 & 3,881 & 4 & -- \\
    &Chicago & Regression & 2,176 & 15,104 & 4 & -- \\\cline{1-1}
    \multirow{4}{*}{Geography}&Education & Regression & 3,234 & 12,717 & 6 & -- \\
    &Election & Regression & 3,234 & 12,717 & 6 & -- \\
    &Income & Regression & 3,234 & 12,717 & 6 & -- \\
    &Unemployment & Regression & 3,234 & 12,717 & 6 & -- \\\cline{1-1}
    Social&Twitch & Regression & 1,912 & 31,299 & 3,170 & -- \\ \bottomrule
    \end{tabular}
    }
    \label{appendix:tab_data}
\end{table}

\begin{figure}[h]
    \centering
    \includegraphics[width = \textwidth]{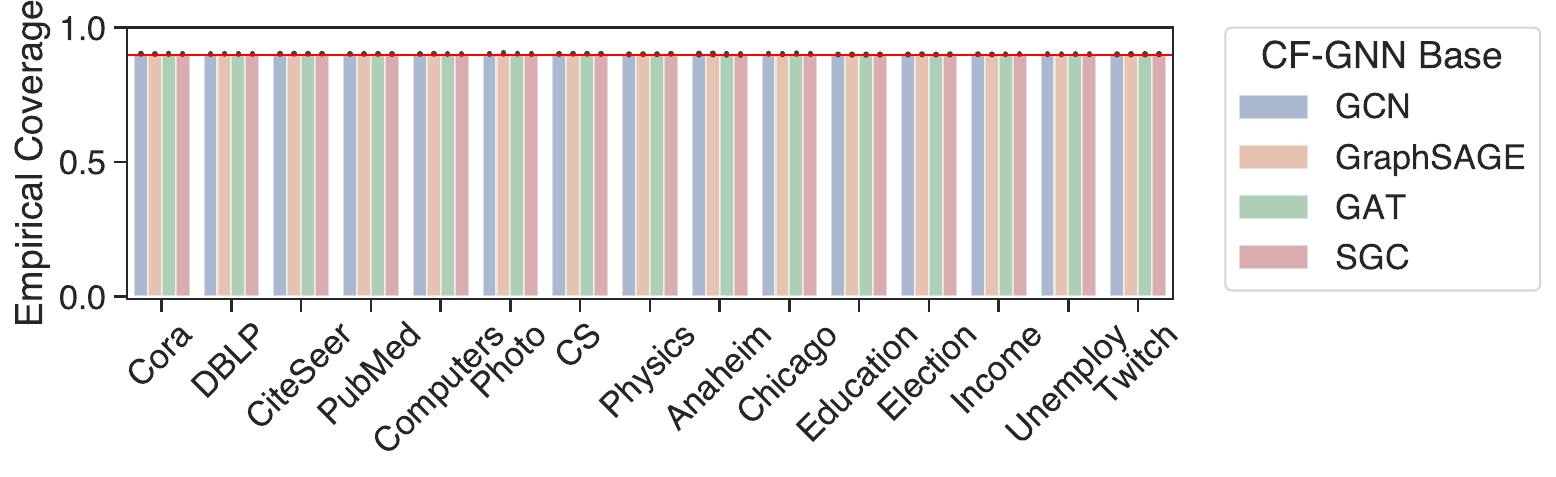}
    \caption{Empirical coverage across 15 datasets with 10 independent runs of GNN, using \mname. }
    \label{fig:coverage_empirical}
\end{figure}

\subsection{Marginal coverage and inefficiency across GNN architectures} \label{appendix:gnn_architectures}

We additionally conduct marginal coverage and inefficiency comparisons of \mname over the vanilla CP across 4 different GNN architectures: GCN, GAT, GraphSAGE, and SGC. The result for marginal coverage is in Figure~\ref{fig:coverage_empirical}. The result for inefficiency is in Table~\ref{tab:eff_all_models}. We observe consistent improvement in inefficiency reduction across these architectures, suggesting \mname is a GNN-agnostic efficiency improvement approach.

\begin{table}[t]
    \centering
    \caption{Empirical inefficiency measured by the size/length of the prediction set/interval for node classification/regression. The result uses APS for classification and CQR for regression. We report the average and standard deviation calculated from 10 GNN runs with each run of 100 conformal splits.}
    \begin{adjustbox}{width=\textwidth}
\begin{tabular}{l|l|c|c|c|c}
\toprule
\multirow{2}{*}{Task} & GNN Model & GCN & GraphSAGE & GAT & SGC \\ \cmidrule{2-6}
& Dataset & CP$\xrightarrow{\hspace*{0.7cm}}$CF-GNN & CP$\xrightarrow{\hspace*{0.7cm}}$CF-GNN & CP$\xrightarrow{\hspace*{0.7cm}}$CF-GNN & CP$\xrightarrow{\hspace*{0.7cm}}$CF-GNN \\\midrule
\multirow{8}{*}{\parbox{1.2cm}{Node classif.}} & Cora&3.80\std{.28}$\xrightarrow{-58.44\%}$1.58\std{.22}&6.73\std{.19}$\xrightarrow{-76.50\%}$1.58\std{.15}&4.14\std{.16}$\xrightarrow{-62.53\%}$1.55\std{.10}&3.88\std{.19}$\xrightarrow{-62.23\%}$1.47\std{.10}\\
&DBLP&2.43\std{.03}$\xrightarrow{-49.20\%}$1.23\std{.01}&3.91\std{.01}$\xrightarrow{-68.27\%}$1.24\std{.01}&2.02\std{.06}$\xrightarrow{-37.51\%}$1.26\std{.01}&2.44\std{.04}$\xrightarrow{-48.93\%}$1.24\std{.02}\\
&CiteSeer&3.86\std{.11}$\xrightarrow{-74.50\%}$0.99\std{.01}&5.88\std{.02}$\xrightarrow{-83.07\%}$1.00\std{.01}&3.18\std{.25}$\xrightarrow{-68.56\%}$1.00\std{.01}&3.79\std{.14}$\xrightarrow{-73.43\%}$1.01\std{.02}\\
&PubMed&1.60\std{.02}$\xrightarrow{-19.44\%}$1.29\std{.04}&1.93\std{.28}$\xrightarrow{-36.95\%}$1.22\std{.03}&1.37\std{.02}$\xrightarrow{-10.54\%}$1.23\std{.02}&1.60\std{.02}$\xrightarrow{-24.27\%}$1.21\std{.02}\\
&Computers&3.56\std{.13}$\xrightarrow{-50.22\%}$1.77\std{.11}&6.00\std{.10}$\xrightarrow{-45.74\%}$3.26\std{.48}&2.33\std{.11}$\xrightarrow{-8.18\%}$2.14\std{.12}&3.44\std{.14}$\xrightarrow{-45.69\%}$1.87\std{.15}\\
&Photo&3.79\std{.13}$\xrightarrow{-57.03\%}$1.63\std{.17}&4.52\std{.47}$\xrightarrow{-37.22\%}$2.84\std{.67}&2.24\std{.21}$\xrightarrow{-19.38\%}$1.81\std{.16}&3.81\std{.14}$\xrightarrow{-62.86\%}$1.41\std{.05}\\
&CS&7.79\std{.29}$\xrightarrow{-55.83\%}$3.44\std{.33}&14.68\std{.02}$\xrightarrow{-88.63\%}$1.67\std{.14}&6.87\std{.48}$\xrightarrow{-73.08\%}$1.85\std{.10}&7.76\std{.25}$\xrightarrow{-73.92\%}$2.02\std{.22}\\
&Physics&3.11\std{.07}$\xrightarrow{-65.36\%}$1.08\std{.10}&4.91\std{.01}$\xrightarrow{-72.97\%}$1.33\std{.08}&2.00\std{.19}$\xrightarrow{-45.23\%}$1.09\std{.06}&3.10\std{.08}$\xrightarrow{-57.22\%}$1.32\std{.12}\\ \midrule
\multicolumn{2}{l|}{Average Improvement} & -53.75\% & -63.75\% & -40.63\% & -56.07\% \\\midrule
\multirow{8}{*}{\parbox{1cm}{Node regress.}} &Anaheim&2.89\std{.39}$\xrightarrow{-25.00\%}$2.17\std{.11}&2.37\std{.05}$\xrightarrow{-23.12\%}$1.82\std{.07}&3.12\std{.38}$\xrightarrow{-31.27\%}$2.14\std{.11}&2.94\std{.24}$\xrightarrow{-24.90\%}$2.21\std{.16}\\
&Chicago&2.05\std{.07}$\xrightarrow{-0.48\%}$2.04\std{.17}&2.08\std{.05}$\xrightarrow{-7.90\%}$1.92\std{.09}&1.95\std{.04}$\xrightarrow{-68.15\%}$0.62\std{.93}&2.02\std{.03}$\xrightarrow{-1.37\%}$1.99\std{.07}\\
&Education&2.56\std{.02}$\xrightarrow{-5.07\%}$2.43\std{.05}&2.20\std{.04}$\xrightarrow{+8.44\%}$2.38\std{.08}&2.48\std{.05}$\xrightarrow{-2.76\%}$2.41\std{.04}&2.55\std{.02}$\xrightarrow{-2.80\%}$2.48\std{.04}\\
&Election&0.90\std{.01}$\xrightarrow{+0.21\%}$0.90\std{.02}&0.87\std{.01}$\xrightarrow{-0.80\%}$0.86\std{.02}&0.89\std{.00}$\xrightarrow{-1.23\%}$0.88\std{.02}&0.90\std{.00}$\xrightarrow{-0.42\%}$0.90\std{.02}\\
&Income&2.51\std{.12}$\xrightarrow{-4.58\%}$2.40\std{.05}&2.08\std{.04}$\xrightarrow{+32.23\%}$2.75\std{.23}&2.35\std{.03}$\xrightarrow{-0.23\%}$2.34\std{.07}&2.42\std{.01}$\xrightarrow{+3.04\%}$2.49\std{.04}\\
&Unemploy.&2.72\std{.03}$\xrightarrow{-10.83\%}$2.43\std{.04}&2.75\std{.06}$\xrightarrow{-12.90\%}$2.39\std{.05}&2.80\std{.08}$\xrightarrow{-14.56\%}$2.40\std{.04}&2.72\std{.02}$\xrightarrow{-11.05\%}$2.42\std{.04}\\
&Twitch&2.43\std{.10}$\xrightarrow{-1.36\%}$2.39\std{.07}&2.48\std{.09}$\xrightarrow{-3.06\%}$2.40\std{.07}&2.50\std{.14}$\xrightarrow{-5.53\%}$2.36\std{.07}&2.42\std{.08}$\xrightarrow{-1.43\%}$2.38\std{.06}\\ \midrule
\multicolumn{2}{l|}{Average Improvement} &-6.73\% & -1.02\% & -17.68\% & -5.56\% \\
 \bottomrule
\end{tabular}    
    \label{tab:eff_all_models}
    \end{adjustbox}
\end{table}

\subsection{\mname with Regularized Adaptive Prediction Sets} \label{appendix:raps}

To further showcase that \mname is a versatile framework that adapts to any advancement in non-conformity scores, we experiment on RAPS~\cite{angelopoulos2020uncertainty}, which regularizes APS to produce a smaller prediction set size. We report the performance using the GCN backbond in Table~\ref{appendix:tab_raps}. We observe that \mname still obtains impressive inefficiency reduction compared to the vanilla application of RAPS to GNN.

\begin{table}[h]
    \centering
    \caption{Comparison with other non-conformity scores that reduce inefficiency. }
    \begin{tabular}{l|c}
    \toprule
    Size  & CP $\longrightarrow$ CF-GNN \\ \midrule
    Cora&1.67\std{.11}$\xrightarrow{-15.35\%}$1.42\std{.05}\\
    DBLP&1.39\std{.02}$\xrightarrow{-5.00\%}$1.32\std{.01}\\
    CiteSeer&1.30\std{.07}$\xrightarrow{-19.85\%}$1.04\std{.04}\\
    PubMed&1.23\std{.01}$\xrightarrow{+2.40\%}$1.26\std{.02}\\
    Computers&1.58\std{.02}$\xrightarrow{-4.59\%}$1.51\std{.05}\\
    Photo&1.34\std{.01}$\xrightarrow{-10.47\%}$1.20\std{.01}\\
    CS&1.29\std{.04}$\xrightarrow{-6.13\%}$1.21\std{.02}\\
    \bottomrule
    \end{tabular}
    \label{appendix:tab_raps}
\end{table}

\subsection{Conditional coverage on full set of network features} \label{appendix:cond_cov_networks}
We report the full set of network features and calculate the worse-slice coverage in Table~\ref{tab:full_network_cond_cov} for target coverage of 0.9 and Table~\ref{tab:full_network_cond_cov_95} for a target coverage of 0.95. We observe that \mname achieves satisfactory conditional coverage across a wide range of diverse network features.

\begin{table}[h]
\centering
    \captionof{table}{\mname achieves conditional coverage. We use Cora/Twitch as an example classification/regression dataset. }
    \adjustbox{max width=0.95\textwidth}{
    \begin{tabular}{l|cc|cc}
    \toprule
    Target: 0.9 & \multicolumn{2}{c|}{Classification} & \multicolumn{2}{c}{Regression} \\ \midrule
    Model & CP & CF-GNN & CP & CF-GNN  \\ \midrule
    Marginal Cov. & 0.90\std{.02} & 0.90\std{.01} & 0.91\std{.02} & 0.91\std{.03}\\ \midrule
    Cond. Cov. (Input Feat.) & 0.89\std{.04} & 0.90\std{.03} & 0.90\std{.07} & 0.86\std{.08} \\ \midrule
    Cond. Cov. (Cluster) & 0.82\std{.07} & 0.89\std{.03} & 0.90\std{.06} & 0.88\std{.07} \\
    Cond. Cov. (Between) & 0.82\std{.06} & 0.89\std{.03} &0.86\std{.08}&0.88\std{.07}\\
    Cond. Cov. (PageRank) & 0.71\std{.08} & 0.87\std{.05} & 0.87\std{.09} & 0.89\std{.07}\\
    Cond. Cov. (Load) &0.83\std{.05}&0.90\std{.03}&0.86\std{.08}&0.88\std{.07}\\
    Cond. Cov. (Harmonic) &0.89\std{.04}&0.87\std{.05}&0.88\std{.08}&0.91\std{.06}\\
    Cond. Cov. (Degree) & 0.79\std{.05} & 0.89\std{.04} & 0.86\std{.08} & 0.89\std{.06} \\    
    \bottomrule
    \end{tabular}
    \label{tab:full_network_cond_cov}
    }

\end{table}

\begin{table}[h]
\centering
    \captionof{table}{\mname achieves conditional coverage. We use Cora/Twitch as an example classification/regression dataset. }
    \adjustbox{max width=0.95\textwidth}{
    \begin{tabular}{l|cc|cc}
    \toprule
    Target: 0.95 & \multicolumn{2}{c|}{Classification} & \multicolumn{2}{c}{Regression} \\ \midrule
    Model & CP & CF-GNN & CP & CF-GNN  \\ \midrule
    Marginal Cov. & 0.95\std{.01} & 0.95\std{.01} & 0.96\std{.02} & 0.96\std{.02}\\ \midrule
    Cond. Cov. (Input Feat.) & 0.94\std{.02} & 0.94\std{.03} & 0.95\std{.04} & 0.94\std{.05} \\ \midrule
    Cond. Cov. (Cluster) & 0.89\std{.06} & 0.93\std{.04} & 0.96\std{.03} & 0.96\std{.03} \\
    Cond. Cov. (Between) & 0.81\std{.06} & 0.95\std{.03} & 0.94\std{.05} & 0.94\std{.05} \\
    Cond. Cov. (PageRank) & 0.78\std{.06} & 0.94\std{.03} & 0.94\std{.05} & 0.94\std{.05}\\
    Cond. Cov. (Load) &0.81\std{.06}&0.94\std{.03}&0.94\std{.05}&0.95\std{.05}\\
    Cond. Cov. (Harmonic) &0.88\std{.04}&0.95\std{.03}&0.96\std{.04}&0.95\std{.04}\\
    Cond. Cov. (Degree) & 0.83\std{.05} &0.88\std{.06} & 0.94\std{.04} & 0.94\std{.04} \\    
    \bottomrule
    \end{tabular}
    \label{tab:full_network_cond_cov_95}
    }

\end{table}

\subsection{Prediction accuracy versus uncertainty calibration} \label{appendix:accuracy_prediction}

As we discussed in the main text, the original GNN trained towards optimal prediction accuracy does not necessarily yield the most efficient prediction model; this is corrected by the second GNN in CF-GNN which improves the efficiency of conformal prediction sets/intervals. 
With our approach, one can use the output of the original GNN for point prediction while that of the second GNN for efficient uncertainty quantification, without necessarily overwriting the first accurate prediction model. 
However, a natural question here still remains, which is that whether applying the second GNN drastically changes the prediction accuracy. This question is more relevant to the classification problem since for regression our method only adjusts the confidence band. For classification, we consider  top-1 class prediction as the ``point prediction''. We present its accuracy "Before" and "After" the correction in Table~\ref{tab:accuracy_change}, which shows that this correction typically does not result in a visible change in accuracy. In addition, in a new experiment on Cora, we find that 100\% of the top-1 class from the base GNN are in CF-GNN’s prediction sets. The potential to develop steps that explicitly consider point prediction accuracy is an exciting avenue for future research.

\begin{table}[h]
\centering
    \captionof{table}{\mname does not change the top-1 class prediction accuracy for classification tasks. }
    \adjustbox{max width=0.95\textwidth}{
    \begin{tabular}{l|c|c}
    \toprule
    Dataset & Before & After \\  \midrule
    Cora	& 0.844\std{0.004} &	 0.843\std{0.016} \\ 
DBLP	& 0.835\std{0.001}	& 0.832\std{0.002} \\
CiteSeer&	 0.913\std{0.002}	& 0.911\std{0.002}
     \\    
    \bottomrule
    \end{tabular}
    \label{tab:accuracy_change}
    }

\end{table}

\section{Extended Related Works}\label{appendix:related_work}

\xhdr{Uncertainty quantification for graph neural networks} Uncertainty quantification (UQ) is a well-studied subject in general machine learning and also recently in GNNs. For multi-class classification, the raw prediction scores are often under/over-confident and thus various calibration methods are proposed for valid uncertainty estimation such as temperate scaling~\cite{guo2017calibration}, vector scaling~\cite{guo2017calibration}, ensemble temperate scaling~\cite{zhang2020mix}, and so on~\cite{gupta2020calibration,kull2019beyond,stadler2021graph,abdar2021review}. Recently, specialized calibration methods that leverage network principles such as homophily \revise{have} been developed: examples include CaGCN~\cite{wang2021confident} and GATS~\cite{hsu2022makes}. In regression, \revise{various methods have been proposed to construct} prediction intervals, such as quantile regression~\cite{koenker2001quantile,takeuchi2006nonparametric,sheather1990kernel}, bootstrapping with subsampling,  model ensembles, and dropout initialization~\cite{gal2016dropout,lakshminarayanan2017simple,kuleshov2018accurate,ovadia2019can}, and bayesian approaches with strong modeling assumptions on parameter and data distributions~\cite{kendall2017uncertainties,izmailov2021bayesian}. However, these UQ methods can fail to provide statistically rigorous and empirically valid coverage guarantee (see Table~\ref{tab:cov}). In contrast, \mname achieves valid marginal coverage  in both theory and practice. Uncertainty quantification has also been leveraged to deal with out-of-distribution detection and imbalanced data in graph neural networks~\cite{zhao2020uncertainty,gao2023topology}. While it is not the focus here, we remark that conformal prediction can also be extended to tackle such issues~\cite{ishimtsev2017conformal}, and it would be interesting to explore such applications for graph data.

\xhdr{Conformal prediction for graph neural networks} {As we discussed, the application of conformal prediction to graph-structured data remains largely unexplored. At the time of submission, the only work we awared of is \cite{clarkson2022distribution}, who claims that nodes in the graph are not exchangeable in the inductive setting  and employs the framework of~\cite{barber2022conformal} to construct conformal prediction sets using neighborhood nodes as the calibration data. In contrast, we study the transductive setting  where certain exchangeablility property holds and allows for flexibility in the training step. We also study the efficiency aspect that is absent in~\cite{clarkson2022distribution}. In addition, there have been concurrent works~\cite{pmlr-v202-h-zargarbashi23a,lunde2023conformal} that observe similar exchangeability and validity of conformal prediction in either transductive setting or other network models. In particular,~\cite{pmlr-v202-h-zargarbashi23a} proposes a diffusion-based method that aggregates non-conformity scores of neighbor nodes to improve efficiency, while our approach learns the aggregation of neighbor scores, which is more general than their approach. \cite{lunde2023validity} studies the exchangeability for node regression under certain network models instead of our transductive setting with GNNs, and without considering the efficiency aspect. With a growing recent interest in conformal prediction for graphs, there are even more recent works that focus on validity~\cite{lunde2023validity} and  link prediction~\cite{marandon2023conformal}.

\xhdr{Efficiency of conformal prediction} \revise{While conformal prediction enjoys distribution-free coverage for any non-conformity score based on any prediction model, its efficiency (i.e., size of prediction sets or length of prediction intervals) varies with specific choice of the scores and models. How to achieve desirable properties such as efficiency is a topic under intense research in conformal prediction. To this end, one major thread designs good non-conformity scores such as APS~\cite{romano2020classification} and CQR~\cite{romano2019conformalized}. More recent works take another approach, by modifying the training process of the prediction model to further improve  efficiency.
This work falls into the latter case. Our idea applies to any non-conformity scores, as demonstrated with APS and CQR, two prominent examples of the former case. Related to our work, ConfTr~\cite{stutz2021learning} also simulates conformal prediction so as to train a prediction model that eventually leads to more efficient conformal prediction sets.
However, our approach differs from theirs in significant ways. First, ConfTr modifies model training, while \mname conducts post-hoc correction without changing the original prediction. Second, ConfTr uses the training set to simultaneously optimize model prediction and efficiency of conformal prediction, while we withhold a fraction of calibration data to optimize the efficiency. Third, our approach specifically leverages the rich topological information in graph-structured data to achieve more improvement in efficiency. Finally, we also propose a novel loss for efficiency in regression tasks.}

\end{document}